\newcommand*\samethanks[1][\value{footnote}]{\footnotemark[#1]}
\title{Better Algorithms for Stochastic Bandits \\ with Adversarial Corruptions}
\author{
 Anupam Gupta%
 \thanks{Department of Computer Science, Carnegie Mellon University, Pittsburgh PA, USA; \texttt{anupamg@cs.cmu.edu}. Supported in part by NSF awards CCF-1536002, CCF-1540541, and CCF-1617790, and the Indo-US Joint Center for Algorithms Under Uncertainty. } 
 \and
 Tomer Koren%
 \thanks{Google Brain, Mountain View CA, USA; \texttt{\{tkoren,kunal\}@google.com}.}
 \and
 Kunal Talwar%
 \samethanks{}
}
\declaretheorem[style=theorem,numberwithin=,name=Theorem]{thm}
\declaretheorem[style=theorem,sibling=thm,name=Lemma]{lem}
\declaretheorem[style=theorem,numbered=no,name=Theorem]{thm*}
\declaretheorem[style=theorem,numbered=no,name=Lemma]{lem*}
\declaretheorem[style=theorem,numbered=no,name=Corollary]{cor*}
\declaretheorem[style=theorem,numbered=no,name=Proposition]{proposition*}
\declaretheorem[style=theorem,numbered=no,name=Claim]{claim*}
\declaretheorem[style=theorem,numbered=no,name=Fact]{fact*}
\declaretheorem[style=theorem,numbered=no,name=Observation]{observation*}
\declaretheorem[style=theorem,numbered=no,name=Conjecture]{conjecture*}
\declaretheorem[style=definition,numbered=no,name=Definition]{defn*}
\declaretheorem[style=definition,numbered=no,name=Remark]{remark*}
\declaretheorem[style=definition,numbered=no,name=Example]{example*}
\declaretheorem[style=definition,numbered=no,name=Question]{question*}
\renewcommand{\div}{\smash{\big/}}
\newcommand{\wt}[1]{\smash{\widetilde{#1}}}
\renewcommand{\Ex}{\mathbb{E}}
\newcommand{\rew}{R}
\newcommand{\regret}{\mathcal{R}}
\newcommand{\nrealized}{\tilde{n}}
\newcommand{\F}{\mathcal{F}}
\begin{document}
\maketitle

\begin{abstract}
We study the stochastic multi-armed bandits problem in the presence of adversarial corruption.
We present a new algorithm for this problem whose regret is nearly optimal, substantially improving upon previous work.
Our algorithm is agnostic to the level of adversarial contamination and
can tolerate a significant amount of corruption with virtually no degradation in performance.
\end{abstract}

\section{Introduction}
\label{sec:introduction}

We consider the stochastic multi-armed bandit problem with adversarial
corruptions. The classical stochastic multi-armed bandits problem~\citep{lai1985asymptotically}
has long been studied, with many algorithms including the Upper
Confidence Bound (UCB) algorithm~\citep{auer2002finite} and the Active
Arm Elimination (AAE) algorithm~\citep{even2006action}. (See,
e.g.,~\citealp{bubeck2012regret} for more details and many references.) Since the
assumption that the rewards of the arms are stochastic might appear quite rigid,
the \emph{adversarial} multi-armed bandit model has also been studied,
with algorithms like the {\sc Exp3} algorithm of~\cite{auer2002nonstochastic}.

The robustness of adversarial multi-armed bandit algorithms, however, comes at a heavy price: their guarantees are often significantly weaker than those of stochastic algorithms, the regret of which scales only logarithmically with the number of decision rounds.
A natural question is therefore: can we obtain results with guarantees that degrade smoothly as one goes from the stochastic setting towards the adversarial one?
More concretely, we can retain the favorable guarantees of the stochastic setting while tolerating a small amount of adversarial corruption?
These questions were the focus of the recent work by \cite{LykourisMPL18}, who studied the problem of stochastic multi-armed bandit with adversarial corruptions.
They motivated it by a variety of practical scenarios: these range from click fraud in pay-per-click advertisements, to the presence of spam and malicious reviews in recommendation systems.

The model of \citet{LykourisMPL18} is a variant of the classic stochastic multi-armed bandit problem, where each pull of an arm generates a stochastic reward that may be contaminated by an adversary before it is revealed to the player.
The difficulty of the problem then scales with the total amount of corruption introduced by the adversary.
\citet{LykourisMPL18} point out that standard stochastic algorithms like UCB and AAE can suffer severely from corruptions because the adversary, using a small amount of corruption, can cause the actual best arm to be eliminated by the algorithm, and hence make it fail miserably.
To address this, they show how to add robustness to these algorithms at the cost of moderate degradation in performance.

\subsection{The model}

Before proceeding, let us describe the setup in some more detail.
A player is faced with $K$ arms, indexed by $[K] := \{1,2,\ldots,K\}$.
Each arm $i \in [K]$ is associated with a reward distribution, unknown to the player, with mean $\mu_i$.
The best arm $i^\star = \argmax_i \mu_i$ is the one that maximizes the expected reward;
let $\mu^\star = \mu_{i^\star}$ denote the maximal expected reward, and for any suboptimal arm~$i \neq i^\star$ consider its gap $\Delta_i = \mu^\star - \mu_i \in [0,1]$.
The player interacts with the arms and has to repeatedly decide on an arm to pull over the course of~$T$ decision rounds.
This interaction is contaminated by an adversary, that may corrupt the feedback observed by the player on at most $C$ of these $T$ rounds.
(The eventual definition of $C$ will be a bit more general.)
After each arm pull, the player observes the possibly corrupted reward corresponding to that arm, and nothing else; this is traditionally called ``bandit feedback.''
The player's goal is to minimize her regret, which is the difference between the total reward of the best arm and her own cumulative reward.

In this setting, the algorithm of \citet{LykourisMPL18} achieves a regret bound of the form
$
	\wt{O}\big( KC \sum_{i \neq i^\star}\! 1\div\Delta_i \big)
	.
$%
\footnote{Here and throughout, the $\wt{O}$ notation suppresses all dependence on logarithmic terms.}
That is, their bound is $O(KC)$ times worse than the standard
$\wt{O}\big(\! \sum_{i \neq i^\star}\! 1\div\Delta_i \big)$ bound
achievable in the uncorrupted case by algorithms for the stochastic case
such as UCB, yet is still poly-logarithmic in the number of rounds $T$.
Importantly, their algorithm is \emph{agnostic} to the level of corruption inflicted by the adversary and need not know the value of $C$ ahead of time.
It is based on a ``multi-layered'' scheme that automatically adapts to the amount of adversarial contamination: the first layer is the fastest but also the most susceptible to corruptions, whereas the last is the most robust but is also the slowest to learn.

\subsection{Our results}

We show how to get significantly better regret bounds for stochastic multi-armed bandits with adversarial corruptions.
We give a new algorithm for the problem that attains a regret bound which removes the multiplicative dependence on $C$ appearing in the existing bounds and trades it for an additive dependence.
Our algorithm is entirely agnostic to the corruption level $C$, and is also extremely simple.
Formally, our main theorem is the following:
\begin{thm*}[\cref{thm:main}, informal]
There exists an algorithm for stochastic MAB with adversarial corruptions (called BARBAR; see \cref{alg:barbar} below), the regret of which is with high probability at most
\[
	O(KC) + \wt{O}\Bigg( \sum_{i \neq i^\star} \frac{1}{\Delta_i} \Bigg)
	.
\]
\end{thm*}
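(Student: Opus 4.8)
The plan is to design an elimination-style algorithm that proceeds in epochs of geometrically increasing length, where in each epoch we maintain an estimate $\Delta_i^{(m)}$ of each arm's gap and sample arm $i$ a number of times inversely proportional to $(\Delta_i^{(m)})^2$ (as in the uncorrupted regime), but crucially we \emph{never fully eliminate} an arm — instead every arm retains a small sampling probability that decays gracefully. This is what buys robustness: since the adversary has a budget of only $C$ corrupted rounds, it cannot consistently fool the estimates of the good arm across all epochs, and the rare exploratory pulls of every arm guarantee that a wrongly-demoted optimal arm gets ``rediscovered.''

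First I would set up the epoch structure: epoch $m$ has a target precision $\varepsilon_m = 2^{-m}$, and within epoch $m$ arm $i$ is pulled roughly $n_i^{(m)} \asymp \lambda \varepsilon_{m-1}^{-2} \log T$ times for an appropriate constant $\lambda$, using the gap estimate $\Delta_i^{(m-1)}$ computed from the \emph{previous} epoch — with the convention $\Delta_i^{(0)} = 1$ so the first epoch is short. At the end of epoch $m$ we form the empirical mean $\hat{r}_i^{(m)}$ of arm $i$ over that epoch, let $\hat{r}_\star^{(m)} = \max_i \hat{r}_i^{(m)}$, and set the new gap estimate to something like $\Delta_i^{(m+1)} = \max\{\varepsilon_m,\ \hat{r}_\star^{(m)} - \hat{r}_i^{(m)}\}$, floored at $\varepsilon_m$ so it can only shrink geometrically per epoch and never collapses to zero. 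The total number of pulls in epoch $m$ is $N_m = \sum_i n_i^{(m)}$, and one should check $N_m$ grows geometrically so that the number of epochs before round $T$ is $O(\log T)$ and the last meaningful precision is $\wt{\Theta}(1/\sqrt{T})$ per arm.

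The analysis then splits into a clean-execution argument and a charging argument for corruption. Define the ``clean'' event that, for every epoch $m$ and arm $i$, the empirical mean $\hat{r}_i^{(m)}$ computed from the uncorrupted rewards is within $\tfrac14\varepsilon_{m-1}$ of $\mu_i$; by Hoeffding and a union bound over the $O(K\log T)$ epoch-arm pairs this holds with probability $1 - 1/\mathrm{poly}(T)$. On this event, absent corruption, one shows by induction that $\Delta_i^{(m)}$ is a constant-factor over/under-estimate of $\max\{\varepsilon_{m-1}, \Delta_i\}$, so the per-epoch regret contributed by arm $i$ is $n_i^{(m)}\Delta_i \asymp \Delta_i \cdot \varepsilon_{m-1}^{-2}\log T$; summing the geometric series over epochs with $\varepsilon_{m-1} \gtrsim \Delta_i$ gives the desired $\wt{O}(1/\Delta_i)$. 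To handle corruption, observe that if the adversary spends $c_m$ of its budget inside epoch $m$, then the corrupted empirical mean of any arm deviates from the clean one by at most $c_m / n_i^{(m)}$; feeding this through the recursion, the damage to the gap estimate in epoch $m$ is bounded, and — because the sampling in epoch $m+1$ uses $n_i^{(m+1)} \asymp \varepsilon_m^{-2}\log T$ pulls — the extra regret caused by a corruption-induced misestimate is $O(c_m)$ per corrupted round, i.e.\ $O(C)$ summed over the run, times the $K$ arms and possibly a constant from the ``every arm keeps being sampled'' slack, giving the $O(KC)$ additive term. The floor $\varepsilon_m$ on the gap estimates is exactly what prevents a single bad epoch from permanently freezing out an arm: within $O(1)$ epochs the estimate recovers.

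The main obstacle I expect is the \emph{coupling of the corruption budget across epochs in the inductive estimate}: because $\Delta_i^{(m+1)}$ depends on $\Delta_i^{(m)}$ through the sample counts, a naive bound lets corruption errors compound multiplicatively across epochs. The fix is to prove a one-step robustness lemma showing the recursion is \emph{contractive} — an error of size $\delta$ in $\Delta_i^{(m)}$ propagates to an error of size at most (roughly) $\tfrac12\delta + O(c_m/n_i^{(m)})$ in $\Delta_i^{(m+1)}$ — so that the cumulative effect of corruption telescopes to $O(\sum_m c_m) = O(C)$ rather than blowing up. Getting the constants in this contraction right (in particular that the geometric decay of $\varepsilon_m$ dominates) is the delicate quantitative heart of the proof; everything else is a union bound plus summing geometric series.
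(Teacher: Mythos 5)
Your high-level architecture matches the paper's: geometric epochs, gap estimates recomputed from the previous epoch only, a floor of $2^{-m}$ so no arm is ever permanently eliminated, and a contractive recursion so that corruption injected in epoch $s$ is geometrically discounted in later epochs (this is exactly the paper's discounted corruption rate $\rho_m = \sum_{s\le m} 2C_s/(8^{m-s}N_s)$). However, there is a genuine gap at the heart of your corruption accounting. You bound the shift in arm $i$'s empirical mean by $c_m/n_i^{(m)}$, i.e.\ corruption diluted by arm $i$'s \emph{own} pull count. That bound is both unachievable by your sketch (if the pulls within an epoch are scheduled predictably, the adversary can place all its corruption on the rounds where arm $i$ is pulled) and too weak to yield $O(KC)$: to shift $r_i^m$ by $\Delta_i$ it only costs the adversary $c_m \approx \Delta_i n_i^m \approx \lambda/\Delta_i$, after which arm $i$'s gap estimate collapses to $2^{-m}$ and the arm is pulled $\lambda 2^{2m}$ times in the next epoch, incurring extra regret $\lambda 2^{2m}\Delta_i$ --- which can be $\Omega(T)$ for constant $\Delta_i$ and corruption budget $O(\log T)$. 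The paper's essential ingredient, absent from your proposal, is that within each epoch the arm to pull is chosen \emph{independently at random} each round with probability $q_i^m = n_i^m/N_m$. Since the adversary's corruption at round $t$ cannot depend on that round's random choice, only a $q_i^m$-fraction of the corruption aimed at arm $i$ lands on actual pulls of $i$ (made precise via Freedman's inequality for the martingale $\sum_t (Y_i^t - q_i^m)c_i^t$, \cref{lem:reward_conc_part1}). The resulting deviation is $O(C_m/N_m)$ --- diluted by the \emph{full epoch length} --- and since the maximal pull count in epoch $m+1$ is $\lambda 2^{2m} = O(N_m)$, the extra regret per arm per epoch is $O(2^{2m}\lambda \cdot C_m/N_m) = O(C_m)$, giving $O(KC)$ overall. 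Without this randomization-plus-martingale step the claimed additive $O(KC)$ term does not follow.

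A second, smaller gap: you define the reference reward as the plain maximum $\hat r_\star^{(m)} = \max_i \hat r_i^{(m)}$, whereas the paper uses the lower-confidence maximum $r_\star^m = \max_i\{r_i^m - \tfrac{1}{16}\Delta_i^{m-1}\}$. This correction is needed because an arm $j$ with large $\Delta_j^{m-1}$ is pulled only $\lambda(\Delta_j^{m-1})^{-2}$ times and its empirical mean fluctuates by $\pm\Delta_j^{m-1}/16$, which can be a constant; with a plain max, $r_\star^m$ can then overestimate $\mu^\star$ by a constant in every epoch, the lower bound on $\Delta_i^m$ (your ``constant-factor under-estimate'' induction, the paper's \cref{lem:deltaji_lb}) fails for arms with small true gaps, and those arms may be pulled $\lambda 2^{2m}$ times per epoch, again producing linear regret. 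Subtracting each arm's own confidence width inside the max is what makes $r_\star^m \le \mu^\star + 2C_m/N_m$ (\cref{lem:R_bounds}) and lets the induction close.
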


To put the parameters in context, consider an instance with $K$ arms and
gaps $\Delta_i = \sqrt{K/T}$;
in the uncorrupted case, this is a prototypical hard instance where the minimax
regret is $\wt{\Theta}(\sqrt{KT})$ and is matched by both stochastic
algorithms, like UCB, and adversarial algorithms, like {\sc Exp3}.
(In fact, this instance was used to show the $\Omega(\sqrt{KT})$ lower bound for adversarial bandits by \citealp{auer2002nonstochastic}.)
In the corrupted case, however,
algorithms for the stochastic case might fail and suffer linear $\Omega(T)$ regret.
We are robust to corruptions up to the level $C = O(\sqrt{T/K})$, obtaining
essentially the same regret as in the uncorrupted case. In contrast, the
previous bounds from~\cite{LykourisMPL18} can only guarantee
$O(K^{1.5}C\sqrt{T})$ regret, and thus suffer from a degradation in
performance due to the corruptions already at level $C=O(1)$ while becoming vacuous when $C = \Omega(\sqrt{T/K})$.

In the uncorrupted case, where $C=0$, our result recovers the classical
bounds (achieved, e.g., by the UCB algorithm) up to a logarithmic factor.
We also show that under several different conditions, it is possible to
improve our regret bound and remove the factor of $K$ from its first term;
we leave as an open question whether this is possible to achieve in the general, agnostic case without any further assumptions.

\subsection{Related work}

Bridging between the stochastic and adversarial settings in multi-armed bandits, and in online learning more generally, has been a topic of significant interest in recent years.
Most research in this direction has focused on obtaining ``the best of both worlds'' guarantees \citep{bubeck2012best,seldin2014one,auer2016algorithm,seldin2017improved,zimmert2018optimal,zimmert2019beating}.
The goal there is to achieve the better of the bounds at the two extremes: the worst case $O(\sqrt{T})$-type bound on any problem instance, and the better $O(\log{T})$-type bound whenever the instance is actually stochastic.
In particular, algorithms of this kind cannot adapt to instances that are close to being stochastic, which are treated as fully adversarial.
The model we consider here aims to gradually interpolate between the two extremes, as one goes from the stochastic setting and injects a moderate amount of adversarial corruptions.

Adversarial contaminations similar to those considered here have been studied before in the context of bandit problems.
\citet{seldin2014one} and \citet{zimmert2018optimal} consider a ``moderately contaminated'' regime in which the adversarial corruptions do not reduce the gap $\min_{i \neq i^\star}\! \Delta_i$ by more than a constant factor at any point in time.
This regime of contamination is very restrictive and, for example, precludes virtually any form of corruption in the early stages on learning.
\citet{KapoorKK2018} study a closely related model, where each step is
contaminated independently with probability $\eta$.
They also study a contextual bandit model where the corruption is adversarial, subject to the constraint that for any prefix, at most an $\eta$ fraction of the steps are corrupted.
\citet{altschuler2018best} consider contaminated stochastic bandits in the closely related best-arm identification setting, where rounds of a stochastic bandits instance are contaminated uniformly at random by an ``outlier'' distribution, and the goal is to identify the best arm in spite of the corruptions.
This model of contamination is again weaker than the adaptive adversarial model we consider here, and their goal is also somewhat different than ours.

A somewhat different flavor of robustness in multi-armed bandits has been explored by \cite{bubeck2013bandits}.
Their ``Robust UCB'' algorithm can tolerate heavy-tailed distributions of rewards; namely, it does not require boundness or sub-Gaussianity of the rewards, and instead only needs them to have bounded variance (or any higher-order moment).
These ideas were later adapted to other bandit settings (e.g., \citealp{yu2018pure,shao2018almost}).

Finally, we acknowledge the long line of research in statistics~\citep{huber64} and statistical (batch) learning~\citep{Valiant85,KearnsLi88} on making algorithms robust to various kinds of adversarial perturbations. There
have been several recent advances in computationally efficient algorithms for
problems of this kind; see, e.g.,~\citet{diakonikolas2016robust,lai2016agnostic,charikar2017learning,diakonikolas2018learning,klivans2018efficient} and the references therein.

\section{Setup: Stochastic MAB with Adversarial Corruptions}
\label{sec:setup}

We consider a multi-armed bandit problem with $K$ arms, indexed by $[K]
= \{1,2,\ldots,K\}$.  Each arm $i \in [K]$ is associated with an unknown
reward distribution with mean $\mu_i$; for simplicity we assume that the
reward distributions are all supported in $[0,1]$,%
in which case $\mu_i \in [0,1]$.  We let
$i^\star = \argmax_i \mu_i$ denote the best arm, and let $\mu^\star =
\mu_{i^\star}$ be the maximal expected reward.
Finally, for any arm $i \neq i^\star$ we define the gap $\Delta_i =
\mu^\star - \mu_i \in [0,1]$; to simplify notation we assume that the
optimal arm is unique, in which case $\Delta_i > 0$ for all $i \neq
i^\star$.

A player has to repeatedly choose an arm to pull and observe its reward
as feedback, that may be contaminated by an adversary;  
without loss of generality, we assume that the adversary is deterministic.
Formally, the interaction proceeds as follows. On each decision round $t=1,\ldots,T$:
\begin{enumerate}[label=(\roman*)]
\item
A stochastic reward $\rew^t_i \in [0,1]$ is drawn for each arm $i$ from its reward distribution;
\item
The adversary observes the vector $\rew^t$ and generates corrupted rewards $\wt{\rew}^t \in [0,1]^K$;
\item
The player picks $i_t \in [K]$, possibly at random, and observes the corrupted reward $\wt{\rew}^t_{i_t}$.
\end{enumerate}

The player is evaluated using the standard metric in stochastic MAB
models of {\em pseudo-regret} (henceforth called simply {\em regret};
see remark below about other possible notions of regret), given by
\begin{align*}
	\regret(T)
	= \sum_{t=1}^T \big( \mu^\star - \mu_{i_t} \big)
	.
\end{align*}
We will be interested in bounding the player's regret in terms of the level of corruption introduced by the adversary.
To this end, we measure the level of corruption $C$ as:
\begin{align*}
	C = \sum_{t=1}^T \norm{\wt{\rew}^t - \rew^t}_\infty
	.
\end{align*}

Our main focus will be on the case where $C$ is unknown, and the player has to be \emph{agnostic} to the level of corruption.

\paragraph{Remarks.}
Notice that we allow the adversary to be adaptive, in the sense that the corruptions on round $t$ may be determined as a function of the player's past choices as well as of the stochastic rewards in the current and previous rounds.
(However, the corruption on round $t$ is independent of the choice of the player on the same round.)
As a consequence, the corruption level $C$ is a random variable that depends on the stochastic rewards as well as on the player's internal randomization; the bounds we provide in this paper will depend on the realized value of this random variable.

Also note that our definition of regret above is precisely the familiar pseudo-regret metric ubiquitous in stochastic MAB models.
Our results directly extend to several other notions of regret; we discuss this in detail in \cref{sec:discussion}.

\section{The BARBAR Algorithm}
\label{sec:algo}

At a high level, our algorithm is similar in spirit to the Active Arm Elimination (AAE) algorithm for stochastic bandits, but contains some crucial modifications that make it robust to corruptions.

The algorithm proceeds in epochs which increase exponentially in length: the $m^{th}$ epoch has length roughly $2^{2m}$.
At the beginning of each epoch $m$, the algorithm computes an empirical estimate $\Delta^{m-1}_i$ of the gap of each arm $i$ based on its pulls during the previous epoch;
namely, we take the difference between the reward-per-pull for arm $i$ and a lower confidence bound for the highest reward-per-pull of any arm in epoch $m-1$.
The algorithm uses these estimates to bias towards pulling the seemingly-better arms: each arm is pulled roughly $(\Delta^{m-1}_i)^{-2}$ times, but never more than~$2^{2m}$ times, during epoch $m$.
As a consequence, the algorithm never eliminates an arm permanently; it gives the seemingly-not-so-good arms some recourse by pulling them a small number of times in each epoch.
This, along with the fact that each epoch only uses information from the immediately preceding epoch, ensures that any corruption has a bounded impact on the algorithm's regret.

\begin{algorithm}[th]
	\caption{BARBAR: Bandit Algorithm with Robustness: Bad Arms get Recourse}
   \label{alg:barbar}
\begin{algorithmic}[1]
	\STATE {\bf Parameters:} confidence $\delta \in (0,1)$, time horizon $T$.
	\STATE Initialize $T_0 = 0$ and $\Delta_i^{0} = 1$ for all $i \in [K]$.
	\STATE Set $\lambda = 1024\ln(\frac{8K}{\delta}\log_2\!T)$.
   	\FOR{epochs $m=1,2,\ldots$}
		\STATE Set $n_i^{m} = \lambda (\Delta_i^{m-1})^{-2}$ for all $i \in [K]$.
		\STATE Set $N_m = \sum_{i=1}^K n_i^{m}$ and $T_m = T_{m-1} + N_{m}$.
		\FOR{$t=T_{m-1}+1$ {\bfseries to} $T_{m}$}
			\STATE choose an arm $i$ with probability $n_i^m \div N_m$ and pull it.
	    \ENDFOR
		\STATE Set $r_i^m = S_i / n_i^m$ where $S_i$ is the total reward from the pulls of arm $i$ in this epoch.
    \STATE Set $r^m_\star = \max_i \{r_i^m - \frac{1}{16} \Delta_i^{m-1}\}$
		\STATE Set $\Delta_i^m = \max\{ 2^{-m}, r^m_\star - r_i^m \}$
   	\ENDFOR
\end{algorithmic}
\end{algorithm}

Formally, the BARBAR ({\em Bandit Algorithm with Robustness: Bad Arms get Recourse}) algorithm is presented as \cref{alg:barbar}.
The following is our main result: an $\wt{O}\big( KC + \sum_{i \neq i^\star}\! 1 \div \Delta_i \big)$ regret bound for the BARBAR algorithm.

\begin{thm}
\label{thm:main}
With probability at least $1-\delta$, the regret of \cref{alg:barbar} is bounded by
\[
	O\Bigg(KC + \sum_{i \neq i^\star} \frac{\log{T}}{\Delta_i} \log\!\Big(\frac{K}{\delta}\log{T}\Big) \!\Bigg).
\]
\end{thm}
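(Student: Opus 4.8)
The plan is to analyze the algorithm epoch-by-epoch, showing that the empirical gap estimates $\Delta_i^m$ track the true gaps $\Delta_i$ up to constant factors and up to an additive error that is controlled by the corruption ``charged'' to each epoch. First I would fix the high-probability event: in each epoch $m$, arm $i$ is pulled $n_i^m = \lambda(\Delta_i^{m-1})^{-2}$ times (in expectation over the within-epoch sampling; one should also handle the fact that the number of pulls is itself random, or note that the sampling is i.i.d.\ multinomial so a Chernoff bound gives concentration of the count), so by a Bernstein/Hoeffding bound the empirical mean $r_i^m$ of the \emph{stochastic} rewards is within $\tfrac{1}{\text{const}}\Delta_i^{m-1}$ of $\mu_i$ with probability $1-\delta/(\text{poly})$; union-bounding over all $K$ arms and all $O(\log T)$ epochs uses up the $\delta$. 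Here the choice $\lambda = \Theta(\ln(\tfrac{K}{\delta}\log T))$ is exactly what makes the deviation $\sqrt{\lambda/n_i^m}\asymp \Delta_i^{m-1}$ small enough.

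Next I would bring in the corruption. Let $C_m$ denote the total corruption injected during epoch $m$, so $\sum_m C_m \le C$. The corrupted empirical mean $r_i^m$ (what the algorithm actually sees) differs from the clean empirical mean by at most $C_m / n_i^m$ in absolute value, since each corrupted round shifts the reward by at most its $\ell_\infty$ corruption and there are $n_i^m$ pulls in the denominator. So overall
\[
	\bigl| r_i^m - \mu_i \bigr| \;\le\; \tfrac{1}{32}\Delta_i^{m-1} + \tfrac{C_m}{n_i^m}
	.
\]
The key structural claim I would then prove by induction on $m$ is a two-sided bound of the form: $\Delta_i^m \ge c\,\Delta_i$ for a constant $c$ (so the algorithm never under-pulls a bad arm too severely — this is where the ``bad arms get recourse'' via the $\max\{2^{-m},\cdot\}$ floor matters, and where $r_\star^m \ge \mu^\star - (\text{small})$ is used), and conversely $\Delta_i^m \le C'\Delta_i + (\text{corruption term})$. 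The lower confidence correction $-\tfrac{1}{16}\Delta_i^{m-1}$ in the definition of $r_\star^m$ is designed precisely so that $r_\star^m$ is, on the clean event, a valid proxy for $\mu^\star$: it is at most $\mu^\star$ (so $\Delta_i^m$ doesn't over-estimate) yet at least $\mu^\star$ minus something comparable to the current accuracy (so $\Delta_i^m$ doesn't collapse to the floor for genuinely-bad arms).

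Finally I would convert these estimates into a regret bound. The regret contributed by arm $i$ in epoch $m$ is $n_i^m \Delta_i = \lambda \Delta_i / (\Delta_i^{m-1})^2$. Using $\Delta_i^{m-1} \gtrsim \Delta_i$ in the ``steady state'' this is $O(\lambda/\Delta_i)$ per epoch, and summing over $O(\log T)$ epochs gives the $\wt O(\sum_{i\ne i^\star} 1/\Delta_i)$ term. The corruption degrades $\Delta_i^{m-1}$ below $\Delta_i$: specifically, if the corruption term $C_{m-1}/n_i^{m-1}$ in the previous epoch was the dominant contribution to shrinking $\Delta_i^{m-1}$, then $n_i^m = \lambda/(\Delta_i^{m-1})^2$ can blow up, but one shows the \emph{extra} regret $n_i^m \Delta_i$ incurred this way telescopes/charges against $K \cdot C_{m-1}$ (the factor $K$ appears because the corruption budget of one epoch can affect the estimate — and hence the pull count — of all $K$ arms, and because a unit of corruption can inflate $n_i^m$ in a way that costs $\Theta(1)$ regret per arm). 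Summing $\sum_m K C_m \le KC$ yields the first term. I expect the main obstacle to be making this charging argument rigorous: one needs to carefully set up the induction so that an epoch's corruption is ``spent'' only once and the case analysis (corruption-dominated vs.\ noise-dominated vs.\ floor-dominated regimes for each $\Delta_i^m$) is exhaustive, and one must be careful that the multiplicative slack in $\Delta_i^m \asymp \Delta_i$ doesn't compound across the $\log T$ epochs — which it doesn't, because each epoch's estimate depends only on the immediately preceding epoch, not on the whole history.
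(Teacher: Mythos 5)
Your overall architecture (a high-probability event, a two-sided induction on the gap estimates $\Delta_i^m$, and a case analysis that charges corruption-dominated epochs against $KC$) matches the paper's, but there is a genuine gap at the single most important quantitative step: the bound on how much corruption can shift $r_i^m$. You bound the shift deterministically by $C_m/n_i^m$ (epoch corruption divided by the number of pulls of arm $i$). The paper's bound is $2C_m/N_m$ plus a lower-order term --- corruption divided by the \emph{epoch length} --- which is smaller by the factor $n_i^m/N_m$. This is not a technicality; it is the reason the algorithm samples an arm i.i.d.\ with probability $q_i^m = n_i^m/N_m$ at every step of the epoch rather than pulling arms in blocks. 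Because the corruption at round $t$ is fixed before the algorithm's independent arm draw at round $t$, the corruption absorbed into $S_i$ is $\sum_{t} Y_i^t c_i^t$, a martingale whose compensator is at most $q_i^m \sum_t |c_i^t| \le q_i^m C_m$; a Freedman-type inequality then yields $|r_i^m - \mu_i| \le 2C_m/N_m + \Delta_i^{m-1}/16$ with high probability. Your deterministic bound corresponds to an adversary who can target exactly the rounds on which arm $i$ is pulled, which the model forbids.

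With the weaker bound the final charging does not close. In your corruption-dominated case you have $\Delta_i \le O(C_{m-1}/n_i^{m-1})$ and incur regret $n_i^m\Delta_i \le O\big((n_i^m/n_i^{m-1})\,C_{m-1}\big)$; since $n_i^{m-1}$ can be as small as $\lambda$ (an arm already identified as bad) while $n_i^m$ can be as large as $\lambda 2^{2(m-1)}$ (after its estimate collapses to the floor), the ratio can be $\Theta(2^{2m}) = \Theta(T/\lambda)$, giving something closer to $TC$ than $KC$. With the paper's $C_{m-1}/N_{m-1}$ accounting the same step gives $n_i^m\Delta_i \le O\big((n_i^m/N_{m-1})C_{m-1}\big) = O(C_{m-1})$ per arm, hence $O(KC_{m-1})$ per epoch, and the geometric discounting of earlier epochs' corruption (the paper's $\rho_m = \sum_{s\le m} 2C_s/(8^{m-s}N_s)$, which tracks how an error in $\Delta_i^{m-1}$ propagates through the confidence width $\Delta_i^{m-1}/16$ into epoch $m$) makes the sum over epochs converge to $O(KC)$. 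The remaining ingredients of your outline --- the $\tilde{n}_i^m \le 2n_i^m$ concentration, the role of the $-\frac{1}{16}\Delta_i^{m-1}$ correction in $r_\star^m$, and the three-regime case split --- are faithful to the paper's proof.
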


In \cref{sec:tightexample} we give an example where our algorithm suffers $\Omega(KC)$ regret, showing that our analysis is essentially tight.
To convert the bound of \cref{thm:main} into a bound on the expected regret of the algorithm, one can set $\delta = 1 / T$; then, with the remaining probability the regret is trivially bounded by $T$ and the expected contribution to the regret is $O(1)$.
Thus, the expected regret is bounded by
\begin{align*}
	O\Bigg(K\,\Ex[C] + \sum_{i \neq i^\star} \frac{1}{\Delta_i} \log^2(KT) \Bigg).
\end{align*}

We now turn to prove \cref{thm:main}.
We start with a simple observation.

\begin{lem} \label{lem:epochs}
The length $N_m$ of epoch $m$ satisfies
\[
\lambda 2^{2(m-1)} \leq N_m \leq K\lambda 2^{2(m-1)}.
\]
Moreover, the number of epochs $M$ is at most $\log_2 T$.
\end{lem}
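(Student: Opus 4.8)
The plan is to bracket the running gap estimates $\Delta_i^m$ between the floor $2^{-m}$ and $1$, and then read off bounds on the pull counts $n_i^m = \lambda(\Delta_i^{m-1})^{-2}$ and hence on $N_m = \sum_i n_i^m$. The inequality $\Delta_i^m \ge 2^{-m}$ is immediate from the $\max$ in the update rule. For the upper bound $\Delta_i^m \le 1$ I would argue directly (no induction is really needed): for $m\ge 1$ we have $r^m_\star = \max_j\{r_j^m - \tfrac1{16}\Delta_j^{m-1}\} \le \max_j r_j^m \le 1$ since rewards lie in $[0,1]$ and $\Delta_j^{m-1}\ge 0$, while $r_i^m \ge 0$, so $r^m_\star - r_i^m \le 1$; together with $2^{-m}\le 1$ and the base case $\Delta_i^0 = 1$ this gives $\Delta_i^m \le 1$ for all $i,m$. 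Substituting $\Delta_i^{m-1}\in[2^{-(m-1)},1]$ into $n_i^m=\lambda(\Delta_i^{m-1})^{-2}$ yields $\lambda \le n_i^m \le \lambda 2^{2(m-1)}$ for every arm, and summing over the $K$ arms gives the upper bound $N_m \le K\lambda 2^{2(m-1)}$.

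The lower bound $N_m \ge \lambda 2^{2(m-1)}$ needs one extra observation: in every epoch some arm sits exactly on the floor, i.e.\ has $\Delta_i^{m-1} = 2^{-(m-1)}$, and that arm alone contributes $n_i^m = \lambda 2^{2(m-1)}$ to the sum. For $m=1$ this is every arm, since $\Delta_i^0 = 1 = 2^0$. For $m\ge 2$, let $j$ attain the maximum defining $r^{m-1}_\star$; then $r^{m-1}_\star - r_j^{m-1} = -\tfrac1{16}\Delta_j^{m-2} \le 0 < 2^{-(m-1)}$, so the update sets $\Delta_j^{m-1} = 2^{-(m-1)}$, as required.

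For the bound on the number of epochs I would sum the per-epoch lower bounds over the epochs the algorithm completes within the horizon. Since epochs $1,\ldots,M-1$ all finish by round $T$, we get $T \ge T_{M-1} = \sum_{m=1}^{M-1} N_m \ge \lambda\sum_{m=1}^{M-1} 2^{2(m-1)} \ge \lambda\,2^{2(M-2)}$ for $M\ge 2$. Plugging in the concrete value $\lambda \ge 1024 = 2^{10}$ gives $2(M-2)+10 \le \log_2 T$, hence $M \le \tfrac12\log_2 T - 3 \le \log_2 T$; the case $M\le 1$ is trivial (for $T\ge 2$). I would also note that the algorithm is understood to halt at round $T$, so the final epoch $M$ may be truncated — harmless here, since the counting argument uses only completed epochs and $N_m$ refers to the value computed by the algorithm regardless of truncation.

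I do not expect a real obstacle: the lemma is essentially bookkeeping about the geometric growth of the epoch lengths. The only step that needs a moment's thought is the lower bound on $N_m$, where merely bounding every $\Delta_i^{m-1}$ by $1$ is insufficient (it yields only $N_m \ge K\lambda$); the right move is to exhibit the single arm that the definition of $r^{m-1}_\star$ forces onto the floor $2^{-(m-1)}$, which is exactly what drives the $\approx 4^m$ growth of epoch lengths and, in turn, caps the epoch count at $O(\log T)$.
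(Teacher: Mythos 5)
Your proof is correct and follows essentially the same route as the paper: the lower bound on $N_m$ comes from the arm attaining the maximum in the definition of $r^{m-1}_\star$, which is forced onto the floor $2^{-(m-1)}$; the upper bound from $\Delta_i^{m-1}\ge 2^{-(m-1)}$ for every arm; and the epoch count from summing the geometric lower bounds on the epoch lengths. (One minor aside: your side-claim $\Delta_i^m\le 1$ is not needed anywhere in the lemma, and its justification is slightly off --- $r_i^m = S_i/n_i^m$ normalizes by the \emph{expected} rather than the realized pull count, so it can exceed $1$ --- but nothing in your argument for the stated bounds depends on it.)
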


\begin{proof}
The arm $i_m$ achieving the maximum in the definition of $r^m_\star$ has $r^m_\star
- r_{i_m}^m \leq 0$, so it has $\Delta_{i_m}^m = 2^{-m}$.
This immediately implies that $N_{m+1} \geq n_{i_m}^{m+1} =
\lambda (\Delta^{m}_{i_m})^{-2} = \lambda 2^{2m}$.
The upper bound follows from the fact that each $\Delta_i^m$ is at least $2^{-m}$, and so $n_i^{m+1} \leq \lambda 2^{2m}$ for each $i$.
  The bound on the number of epochs $M$ follows from the lower bound on $N_m$.
\end{proof}

Let $C_m^i$ be the random variable denoting the sum of the corruptions in epoch $m$ to arm $i$'s
reward, and let $C_m := \max_i C_m^i$. Let $\nrealized_i^m$ be the
random variable denoting the actual number of pulls of arm $i$ in epoch $m$.
We define an event $\mathcal{E}$ that captures the relevant tail bounds
as follows:
\begin{align} \label{eq:event}
	\mathcal{E}
	:= \bigg\{
	\forall ~m,i
	\;:\;
	|r_i^m - \mu_i| \leq \frac{2C_m}{N_m} + \frac{\Delta_i^{m-1}}{16}
	\mbox{~~and~~} \nrealized_i^m \leq 2 n_i^m
	\bigg\}
	.
\end{align}

\begin{lem} \label{lem:reward_conc}
It holds that $\Pr[\mathcal{E}] \geq 1 - \delta$.
\end{lem}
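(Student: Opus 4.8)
The plan is to establish the two conjuncts defining $\mathcal{E}$ separately, for each epoch $m$ and arm $i$, each with failure probability $O\bigl(\delta/(K\log_2 T)\bigr)$, and then union bound over the $O(K\log_2 T)$ epoch--arm pairs, using $M\le\log_2 T$ from \cref{lem:epochs}. Write $E_m=\{T_{m-1}+1,\dots,T_m\}$ for the set of rounds of epoch $m$, $p_i = n_i^m/N_m$ for its sampling probability, and condition throughout on the history $\mathcal{F}_{T_{m-1}}$ up to the start of the epoch, so that $N_m$, $n_i^m$, $\Delta_i^{m-1}$, $p_i$ are fixed. The pull-count bound is the easy half: conditioned on $\mathcal{F}_{T_{m-1}}$, the count $\nrealized_i^m$ is a sum of $N_m$ independent $\mathrm{Bernoulli}(p_i)$ variables with mean $n_i^m$, and since $n_i^m=\lambda(\Delta_i^{m-1})^{-2}\ge\lambda$ (as $\Delta_i^{m-1}\le 1$), a multiplicative Chernoff bound gives $\Pr[\nrealized_i^m>2n_i^m]\le e^{-n_i^m/3}\le e^{-\lambda/3}$, which by the choice of $\lambda$ is far below $\delta/(8K\log_2 T)$.

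For the reward bound I would decompose the error $r_i^m-\mu_i=S_i/n_i^m-\mu_i$ through two ``bridge'' quantities, $p_i\sum_{t\in E_m}\wt{\rew}^t_i$ and $p_i\sum_{t\in E_m}\rew^t_i$:
\[
  r_i^m-\mu_i
  \;=\; \underbrace{\frac{1}{n_i^m}\Bigl(S_i-p_i\!\!\sum_{t\in E_m}\!\!\wt{\rew}^t_i\Bigr)}_{(a)}
       \;+\; \underbrace{\frac{1}{N_m}\sum_{t\in E_m}\bigl(\wt{\rew}^t_i-\rew^t_i\bigr)}_{(b)}
       \;+\; \underbrace{\frac{1}{N_m}\Bigl(\sum_{t\in E_m}\rew^t_i-N_m\mu_i\Bigr)}_{(c)},
\]
using $p_i/n_i^m=1/N_m$ in the last two terms. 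Term $(b)$ needs no probability at all: both sums run over all of $E_m$, so $|(b)|\le\frac{1}{N_m}\sum_{t\in E_m}|\wt{\rew}^t_i-\rew^t_i|=C_m^i/N_m\le C_m/N_m$ deterministically, already accounting for (half of) the $2C_m/N_m$ term. For term $(a)$, let $\mathcal{F}_{t-1/2}$ be the history through round $t$ with the round-$t$ rewards adjoined but the pull $i_t$ excluded; since the adversary commits to $\wt{\rew}^t$ before, and independently of, the player's coin, $\Pr[i_t=i\mid\mathcal{F}_{t-1/2}]=p_i$ while $\wt{\rew}^t_i$ is $\mathcal{F}_{t-1/2}$-measurable, so $S_i-p_i\sum_t\wt{\rew}^t_i$ is a martingale with increments in $[-1,1]$ and total conditional variance at most $\sum_t p_i(\wt{\rew}^t_i)^2\le N_m p_i=n_i^m$; a Freedman/Bernstein martingale tail bound gives $|(a)|=O\bigl(\sqrt{\ln(1/\delta')/n_i^m}+\ln(1/\delta')/n_i^m\bigr)$ with probability $1-\delta'$. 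For term $(c)$, conditioned on $\mathcal{F}_{T_{m-1}}$ the rewards $\{\rew^t_i\}_{t\in E_m}$ are $N_m$ i.i.d.\ samples in $[0,1]$ with mean $\mu_i$, so Hoeffding gives $|(c)|=O\bigl(\sqrt{\ln(1/\delta')/N_m}\bigr)$ with probability $1-\delta'$.

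It then remains to plug in constants. Taking $\delta'=\delta/(8K\log_2 T)$, the definition $\lambda=1024\ln(\tfrac{8K}{\delta}\log_2 T)$ is calibrated so that $\sqrt{\ln(1/\delta')/n_i^m}=\Delta_i^{m-1}\sqrt{\ln(1/\delta')/\lambda}\le\Delta_i^{m-1}/32$ (absorbing the universal constant from Freedman into the $1024$), and since $N_m\ge n_i^m$ the bound on $|(c)|$ is no larger, while the leftover $\ln(1/\delta')/n_i^m$ piece is $O((\Delta_i^{m-1})^2/\lambda)$, smaller still. Hence $|(a)|+|(c)|\le\Delta_i^{m-1}/16$, and together with $|(b)|\le C_m/N_m\le 2C_m/N_m$ this yields $|r_i^m-\mu_i|\le 2C_m/N_m+\Delta_i^{m-1}/16$; a union bound over the $O(K\log_2 T)$ epoch--arm pairs (and, within each, over the events for $(a)$, $(c)$, and the pull count) keeps the total failure probability below $\delta$. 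The one delicate step is term $(a)$: the filtration must be arranged so that $i_t$ carries the correct conditional mean $p_i$ while the adversarial reward $\wt{\rew}^t_i$ is already measurable, and then a martingale concentration inequality with a \emph{surely} valid conditional-variance bound is applied. The whole point of routing through the bridge quantities is that the corruption is then isolated in term $(b)$, where it appears with the same $p_i$ on both sides and collapses into a deterministic $\ell_1$ bound; this is precisely what lets us avoid proving concentration for the corruption mass $C_m^i$, which the adversary controls adaptively and which is therefore awkward to handle on its own. The factor $2$ in front of $C_m/N_m$ in the statement is pure slack.
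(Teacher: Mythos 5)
Your argument is correct, but it takes a genuinely different route from the paper's. The paper writes $r_i^m = (A_i^m+B_i^m)/n_i^m$ with $A_i^m=\sum_t Y_i^t \rew_i^t$ and $B_i^m=\sum_t Y_i^t c_i^t$: the uncorrupted part is a sum of independent $[0,1]$ variables handled by a multiplicative Chernoff bound, while the corruption part is centered to the martingale $\sum_t (Y_i^t-q_i^m)c_i^t$ and controlled by Freedman with the adversary-dependent variance proxy $V\le q_i^m\sum_t|c_i^t|\le q_i^m C_m$; the $2C_m/N_m$ in the event $\calE$ is one $C_m/N_m$ from the conditional mean of $B_i^m$ plus one from the $V/b$ term. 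You instead route through the bridge $p_i\sum_t\wt{\rew}_i^t$, so the corruption collapses into the deterministic bound $|(b)|\le C_m/N_m$ and never enters any concentration argument; the price is that your martingale term $(a)$ has conditional variance of order $n_i^m$ rather than $q_i^m C_m$, so it must be controlled by a Bernstein/Freedman bound at the full scale $\sqrt{\ln(1/\delta')/n_i^m}$ (the Beygelzimer-style form $X\le V/b+b\ln(1/\delta)$ used in the paper recovers this after optimizing over $b\ge 1$, which is legitimate since $|X_t|\le 1\le b$). Both routes use exactly one martingale inequality plus one independent-sum inequality and yield the same event; yours quarantines the adversary's adaptivity in a term requiring no probability at all, which is arguably cleaner, and it even saves the factor of $2$ on the corruption term. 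The one loose end is arithmetic: with $\lambda=1024\ln(1/\delta')$ as given, the two-sided Freedman and Hoeffding constants make $|(a)|+|(c)|$ come out to about $\bigl(\sqrt{2}/32+1/(32\sqrt{2})\bigr)\Delta_i^{m-1}\approx\Delta_i^{m-1}/15$, slightly over the targeted $\Delta_i^{m-1}/16$; this is repaired by a constant-factor increase of $\lambda$ (or by spending your unused slack in the $2C_m/N_m$ term) and does not affect the result.
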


The lemma is obtained from the following concentration inequalities concerning the reward estimates~$r^m_i$ computed by the algorithm and the number of arm pulls~$\nrealized^m_i$ it uses.

\begin{lem}
  \label{lem:reward_conc_part1}
  For any fixed $i, m$ and any $\beta \geq 4e^{-\lambda/16}$, we have
  \begin{align*}
    \Pr\!\bigg[|r_i^m - \mu_i| \geq \frac{2C_m}{N_m} +
    \sqrt{\frac{4\ln \nicefrac 4 \beta}{n_i^m}} \bigg]
    \leq \beta
    \mbox{~~~and~~~}
    \Pr\!\big[ \nrealized_i^m \geq 2 n_i^m \big] \leq \beta
    .
  \end{align*}
\end{lem}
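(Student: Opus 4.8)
The plan is to deduce both tail bounds by first conditioning on the history through the end of epoch $m-1$, so that $n_i^m$, $N_m$, and the sampling probability $p_i^m := n_i^m/N_m$ become deterministic and $n_i^m \ge \lambda$. The structural fact driving everything is that, within epoch $m$, the algorithm draws the pulls $i_{T_{m-1}+1},\dots,i_{T_m}$ i.i.d.\ from $p^m$ and independently of all rewards and corruptions of the epoch --- the adversary must commit to the round-$t$ corruption before seeing the player's round-$t$ choice. The bound on $\nrealized_i^m = \sum_{t\in\text{epoch } m}\mathbf 1\{i_t=i\}$ is then immediate: it is a $\mathrm{Binomial}(N_m,p_i^m)$ random variable with mean $n_i^m$, so a multiplicative Chernoff bound gives $\Pr[\nrealized_i^m \ge 2n_i^m] \le e^{-n_i^m/3} \le e^{-\lambda/16} \le \tfrac14\beta \le \beta$, using $n_i^m \ge \lambda$ and $\beta \ge 4e^{-\lambda/16}$.

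For the reward estimate I would write $n_i^m(r_i^m-\mu_i) = S_i - n_i^m\mu_i$ and peel off the corruption:
\[
  S_i - n_i^m\mu_i
  \;=\; \underbrace{p_i^m\!\!\sum_{t\in\text{epoch } m}\!\!\big(\wt\rew^t_i - \rew^t_i\big)}_{\text{corruption bias}}
  \;+\; \underbrace{\sum_{t\in\text{epoch } m}\!\Big[\big(\mathbf 1\{i_t=i\} - p_i^m\big)\wt\rew^t_i + p_i^m\big(\rew^t_i - \mu_i\big)\Big]}_{=:\,\sum_t X_t}.
\]
The corruption bias is bounded \emph{deterministically} by $p_i^m\sum_t|\wt\rew^t_i - \rew^t_i| = \tfrac{n_i^m}{N_m}C_m^i \le \tfrac{n_i^m}{N_m}C_m$, which after dividing by $n_i^m$ contributes at most $C_m/N_m \le 2C_m/N_m$. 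It then suffices to show $|\sum_t X_t| \le \sqrt{4 n_i^m\ln(4/\beta)}$ with probability at least $1-\beta$. With respect to the filtration that, within the epoch, reveals $\rew^t$ and then $i_t$, the sequence $(X_t)$ is a martingale difference sequence: conditioned on the past and on $\rew^t$, the term $(\mathbf 1\{i_t=i\}-p_i^m)\wt\rew^t_i$ has mean zero because $i_t$ is independent of $\wt\rew^t_i$, and $p_i^m(\rew^t_i - \mu_i)$ has conditional mean zero after further averaging over $\rew^t$. The increments are $O(1)$, and --- because every term carries a factor $p_i^m$, either through the variance of $\mathbf 1\{i_t=i\}$ or explicitly --- the conditional variances sum to at most $2N_m p_i^m = 2n_i^m$. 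This is the crux: the variance scales with $n_i^m$ rather than with the epoch length $N_m$, which is exactly why a Bernstein/Freedman-type martingale inequality is needed here (plain Azuma--Hoeffding would be far too lossy, since the increment \emph{widths} are $\Theta(1)$). Plugging $V = 2n_i^m$ into Freedman's inequality and using $\ln(4/\beta) \le \lambda/16 \le n_i^m/16$ to dominate the lower-order $O(a)$ term in the exponent yields the stated deviation bound; the factor-of-$2$ slack in both $2C_m/N_m$ and $\sqrt{4\ln(4/\beta)}$ comfortably absorbs the numerical constants.

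The main obstacle is getting the corruption accounting right for an \emph{adaptive} adversary. The key is to order the filtration so that the round-$t$ corruption becomes measurable \emph{before} the pull $i_t$ is drawn; this makes the systematic (mean) effect of the corruption on $S_i$ equal to $p_i^m\sum_t(\wt\rew^t_i-\rew^t_i)$ --- only a $p_i^m = n_i^m/N_m$ fraction of the total corruption to arm $i$ --- which is what produces the $C_m/N_m$ scaling and is then bounded with no concentration at all. The remaining effect of the corruption --- which of arm $i$'s pulls happen to land on corrupted rounds --- sits inside the martingale $\sum_t X_t$, but does no harm there because the corrupted rewards are still confined to $[0,1]$, so the variance stays $O(n_i^m)$. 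A secondary nuisance is that $n_i^m$ and $N_m$ are themselves random, determined at the end of epoch $m-1$, so one conditions on that history up front (or appeals to a version of Freedman's inequality with a predictable variance proxy); the rest is routine constant chasing.
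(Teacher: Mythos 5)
Your argument is correct in substance but follows a genuinely different decomposition from the paper's. The paper splits $S_i = \sum_t Y_i^t \rew_i^t + \sum_t Y_i^t c_i^t$ and treats the two pieces with different tools: a multiplicative Chernoff bound on the purely stochastic part (whose summands really are independent), and a Freedman bound on the corruption part via the martingale $X_t = (Y_i^t - q_i^m)c_i^t$, whose predictable variance is at most $q_i^m \sum_t |c_i^t| \le q_i^m C_m$ --- i.e., it scales with the corruption, so the form of Freedman recorded in \cref{thm:Freedman} (namely $X \le V/b + b\ln\nicefrac{1}{\delta}$, with $b=1$) immediately gives $B_i^m/n_i^m \le 2C_m/N_m + \ln(\nicefrac 4\beta)/n_i^m$. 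You instead extract the corruption's systematic effect $p_i^m\sum_t(\wt{\rew}^t_i - \rew^t_i)$ deterministically and put everything else --- the sampling noise against the (corrupted) rewards together with the reward noise --- into a single martingale of variance $\Theta(n_i^m)$. That is a clean, more unified treatment (it dispenses with the separate Chernoff step), your handling of adaptivity via the tower property (condition on $\rew^t$, average out $Y_i^t$, then average out $\rew^t$) is exactly the right fix, and your diagnosis of why Azuma--Hoeffding is too lossy matches the paper's reason for reaching for Freedman.

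Two caveats on the quantitative endgame. First, the version of Freedman you need is the Bernstein-form tail $\Pr\bigl[X \ge \sqrt{2V\ln\nicefrac{1}{\delta}} + O(b\ln\nicefrac{1}{\delta})\bigr] \le \delta$; the particular form stated in the paper's appendix is tuned to the small-variance corruption martingale and is useless when $V = \Theta(n_i^m)$ (it would only give $X \lesssim n_i^m$). Second, with your stated bound $V \le 2n_i^m$ the leading term $\sqrt{2V\ln(\nicefrac 4\beta)}$ already equals the entire budget $\sqrt{4 n_i^m \ln(\nicefrac 4\beta)}$, leaving no room for the $O(b\ln\nicefrac 4\beta)$ correction, so the claim that the factor-of-$2$ slack ``comfortably absorbs the constants'' does not hold as written when $C_m=0$. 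The fix stays inside your approach: the cross term in $\Ex[X_t^2 \mid \F_{t-1}]$ vanishes and $\mathrm{Var}(\rew^t_i) \le \nicefrac14$, giving $V \le n_i^m(1 + p_i^m/4) \le \tfrac54 n_i^m$, which together with $\ln(\nicefrac 4\beta) \le n_i^m/16$ brings the total below $\sqrt{4\ln(\nicefrac 4\beta)/n_i^m}$ after dividing by $n_i^m$. With that adjustment the proof goes through.
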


\begin{proof}
  For this proof, we condition on all random variables before epoch $m$,
  so that $n^m_i$, $N_m$ and $T_{m-1}$ are deterministic quantities. At
  each step in this epoch, we pick arm $i$ with probability $q_i^m :=
  n_i^m \div N_m$.  Let $Y_i^t$ be an indicator for arm $i$ being pulled
  in step $t$. Let $\rew_i^t$ be the stochastic reward of arm $i$ on
  step $t$ and $c_i^t := \wt{\rew}_i^t - R_i^t$ be the corruption added
  to this arm by the adversary in this step. Note that $c_i^t$ may
  depend on all the stochastic rewards up to (and including) round~$t$, and also on all \emph{previous}
  choices of the algorithm (though not the choice at step $t$).
  Finally, if we denote $E_m = [T_{m-1}+1, \ldots, T_m]$ be the $N_m$
  many time-steps in epoch $m$, then the first random variable we aim to control can be expressed as
  \[ r_i^m = \frac{1}{n_i^m} \sum_{t \in E_m} Y_i^t \,
  \big( \rew_i^t + c_i^t \big). \]
  For ease of analysis, let us break the sum above into two, and define
  \[ A_i^m = \sum_{t \in E_m} Y_i^t \, \rew_i^t , \qquad \qquad B_i^m =
  \sum_{t \in E_m} Y_i^t \, c_i^t. \]

  Let us first bound the deviation of $A_i^m$.
  Observe that $\rew_i^t$ is an
  independent draw from a $[0,1]$-valued r.v.~with mean $\mu_i$ and
  $Y_i^t$ is a independent draw from an $\{0,1\}$-valued r.v.~with mean
  $q_i^m$. Moreover, we have that $\Ex[A_i^m] = N_m\cdot (q_i^m \mu_i) = n_i^m \mu_i \leq n_i^m$.
  Hence, by a Chernoff-Hoeffding bound (a multiplicative version thereof; see \cref{thm:chernoff} in \cref{sec:freedman}),
  we have
  \begin{gather}
    \Pr\!\bigg[ \bigg| \frac{A_i^m}{n_i^m} - \mu_i \bigg| \geq \sqrt{\frac{3\ln\nicefrac 4 \beta}{n_i^m}} \bigg]
    \leq \frac \beta 2
    . \label{eq:1}
  \end{gather}

  Next we turn to bound the deviation of $B_i^m$.
  Consider the sequence of r.v.s $X_1,\ldots,X_T$ defined by $X_t = (Y_i^t - q_i^m)\cdot c_i^t$ for all $t$.
  Then $\{X_t\}_{t=1}^T$ is a martingale difference sequence with
  respect to the filtration $\{\F_t\}_{t=1}^T$ generated by the r.v.s $\{Y_j^s\}_{j \in [K], s \leq t}, \{\rew_j^s\}_{j \in [K], s \leq t+1}$.
  Indeed, since the corruption $c_i^t$ becomes a deterministic value when
  conditioned on $\F_{t-1}$ (as we assume a deterministic adversary), and since $\Ex[Y_i^t \mid \F_{t-1}] = q_i^m$, we have
  \[ \Ex[X_t \mid \F_{t-1}] = \Ex[ Y_i^t - q_i^m \mid \F_{t-1}] \cdot c_i^t = 0. \]
  Further, we have
  $|X_t| \leq 1$ for all $t$, and we can bound the predictable quadratic variation of this martingale as:
  \[ V = \sum_{t \in E_m} \Ex[ X_t^2 \mid \F_{t-1} ]
  	\leq \sum_{t \in E_m} |c_i^t| \, \mathrm{Var}(Y_i^t)
    \leq q_i^m \sum_{t \in E_m} |c_i^t|.
  \]
  Applying a Freedman-type concentration inequality for martingales (see \cref{thm:Freedman} in \cref{sec:freedman}), we obtain that except with probability $\beta/4$,
  \begin{align*}
    \frac{B_i^m}{n_i^m}
    \leq \frac{q_i^m}{n_i^m} \sum_{t \in E_m} c_i^t + \frac{V + \ln \nicefrac 4 \beta}{n_i^m}
    \leq 2\frac{q_i^m}{n_i^m} \sum_{t \in E_m} |c_i^t| + \frac{\ln \nicefrac 4 \beta}{n_i^m}
    .
  \end{align*}
  Since $q_i^m = n_i^m \div N_m$ and $\sum_{t \in E_m} |c_i^t| \leq C_m$, and further $n_i^m \geq \lambda \geq 16\ln(4/\beta)$, we have with the same probability that
  \begin{align*}
    \frac{B_i^m}{n_i^m}
    \leq \frac{2C_m}{N_m} + \sqrt{\frac{\ln \nicefrac 4 \beta}{16 n_i^m}}
    .
  \end{align*}
  Similar arguments show that $-B_i^m/n_i^m$ satisfies this bound except with probability $\beta/4$.
  It follows that
  \begin{gather}
    \Pr\bigg[ \bigg| \frac{B_i^m}{n_i^m}\bigg| \geq \frac{2C_m}{N_m} + \sqrt{\frac{\ln \nicefrac 4 \beta}{16 n_i^m}} \bigg]
    \leq \frac \beta 2
    . \label{eq:2}
  \end{gather}
  We can now combine \cref{eq:1,eq:2} using a union bound
  to get the claimed bound, conditioned on the values of $n_i^m, N_m, T_m$.
  Finally, since the bound holds for any realizations of these r.v.s, it also holds unconditionally.

  For the second claimed bound, we use a Chernoff-Hoeffding bound (\cref{thm:chernoff} in \cref{sec:freedman}) on the random variable $\nrealized_i^m = \sum_{t \in E_m} Y_i^t$, when we again condition on past epochs and later remove the conditioning.
  The expected value is $\Ex[\nrealized_i^m] = n_i^m \geq \lambda$, and the probability that the r.v.~is more than twice its expectation is at most
  $2\exp(-n_i^m/3) \leq 2\exp(-\lambda/3) \leq \beta$.
  This completes the proof.
\end{proof}

\begin{proof}[Proof of \cref{lem:reward_conc}]
  We use \cref{lem:reward_conc_part1} for each arm $i$ and epoch
  $m$ with $\beta = \delta \div (2K \log_2{T})$; this value satisfies $\beta \geq 4 e^{-\lambda/16}$.
  Since $n_i^m = \lambda (\Delta_i^{m-1})^{-2}$ and $\ln(4/\beta) =
  \lambda \div 1024 = \lambda \div (4 \cdot 16^2)$, the lemma implies
  \[
  	\Pr\!\bigg[|r_i^m - \mu_i| \geq \frac{2C_m}{N_m} + \frac{\Delta_i^{m-1}}{16}
	\mbox{~~and~~}
	\nrealized_i^m \geq 2 n_i^m
	\bigg]
    \leq
    \frac{\delta}{K\log_2 T}.
  \]
  A union bound over the $K$ arms and the $\log_2 T$ epochs concludes the proof.
\end{proof}

For the rest of the analysis, we will condition on the event $\mathcal{E}$.
Our next step is to establish upper and lower bounds on our estimate of the maximal reward $r_{\star}^m$ in epoch $m$.
\begin{lem} \label{lem:R_bounds}
Suppose that $\mathcal{E}$ occurs.
Then for all epochs $m$,
\[
    - \frac{2C_m}{N_m} - \frac{\Delta_{i^\star}^{m-1}}{8}
    \leq r_{\star}^m - \mu^\star
    \leq \frac{2C_m}{N_m}.
\]
\end{lem}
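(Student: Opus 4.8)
The strategy is to analyze the two pieces of the definition $r_\star^m = \max_i\{r_i^m - \tfrac{1}{16}\Delta_i^{m-1}\}$ separately, using the concentration event $\calE$ to relate each empirical reward $r_i^m$ to its true mean $\mu_i$.

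\emph{Upper bound.} For every arm $i$, event $\calE$ gives $r_i^m \le \mu_i + \tfrac{2C_m}{N_m} + \tfrac{\Delta_i^{m-1}}{16}$. Subtracting $\tfrac{1}{16}\Delta_i^{m-1}$ we get $r_i^m - \tfrac{1}{16}\Delta_i^{m-1} \le \mu_i + \tfrac{2C_m}{N_m} \le \mu^\star + \tfrac{2C_m}{N_m}$, since $\mu_i \le \mu^\star$. Taking the maximum over $i$ yields $r_\star^m \le \mu^\star + \tfrac{2C_m}{N_m}$, which is exactly the claimed upper bound.

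\emph{Lower bound.} Here we lower-bound the maximum by plugging in the specific arm $i^\star$. Event $\calE$ gives $r_{i^\star}^m \ge \mu^\star - \tfrac{2C_m}{N_m} - \tfrac{\Delta_{i^\star}^{m-1}}{16}$. Hence
\[
r_\star^m \;\ge\; r_{i^\star}^m - \tfrac{1}{16}\Delta_{i^\star}^{m-1} \;\ge\; \mu^\star - \tfrac{2C_m}{N_m} - \tfrac{\Delta_{i^\star}^{m-1}}{16} - \tfrac{\Delta_{i^\star}^{m-1}}{16} \;=\; \mu^\star - \tfrac{2C_m}{N_m} - \tfrac{\Delta_{i^\star}^{m-1}}{8},
\]
which is the claimed lower bound. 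Rearranging both displays gives the two-sided bound on $r_\star^m - \mu^\star$.

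I do not expect a genuine obstacle here — the lemma is essentially bookkeeping on top of the concentration event $\calE$. The only mild subtlety is recognizing that for the upper bound one must use the inequality $r_i^m \le \mu_i + \cdots$ for \emph{all} arms (and that $\mu_i \le \mu^\star$ kills the arm-dependent term up to the $\tfrac{1}{16}\Delta_i^{m-1}$ slack, which cancels against the subtracted term), whereas for the lower bound it suffices to test the single arm $i^\star$. One should also note that $\Delta_{i^\star}^{m-1}$ is well-defined by the algorithm (it is computed for every arm, including $i^\star$) and is at least $2^{-(m-1)}$, though that fact is not needed in this particular lemma.
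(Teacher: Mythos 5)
Your proof is correct and follows exactly the same route as the paper's: the upper bound comes from applying the event $\mathcal{E}$ upper bound on $r_i^m$ to every arm (with the $\tfrac{1}{16}\Delta_i^{m-1}$ terms cancelling and $\mu_i \le \mu^\star$), and the lower bound from testing the maximum at $i^\star$ and combining the two $\tfrac{1}{16}\Delta_{i^\star}^{m-1}$ terms into $\tfrac{1}{8}\Delta_{i^\star}^{m-1}$. No gaps.
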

\begin{proof}
Recall that $r_\star^m = \max_i \{r_i^m - \frac{1}{16} \Delta_i^{m-1}\}$.
In particular, $r_\star^m \geq r_{i^\star}^m - \frac{1}{16} \Delta_{i^\star}^{m-1}$; plugging in the lower bound on $r_{i^\star}^m$ implied by the event $\mathcal{E}$, we get the left inequality. On the other hand, plugging in the upper bound on $r_i^m$ yields \begin{align*}
  r_\star^m
  \leq \max_{i} \left\{ \mu_i + \frac{2C_m}{N_m} + \frac{\Delta_i^{m-1}}{16} - \frac{\Delta_i^{m-1}}{16} \right\}
  = \max_{i} \{\mu_i\} + \frac{2C_m}{N_m}
  = \mu^\star + \frac{2C_m}{N_m}
  .
  &\qedhere
\end{align*}
\end{proof}

We now establish upper and lower bounds on the gap estimate $\Delta_i^m$ of arm $i$ on epoch $m$.
For all epochs $m$, define the discounted corruption rate
\begin{gather} \label{eq:emmi}
  \rho_m
  := \sum_{s=1}^m \frac{2C_s}{8^{m-s}N_s}
  .
\end{gather}

\begin{lem} \label{lem:deltaji_ub}
Suppose that $\mathcal{E}$ occurs. Then for all epochs $m$ and arms $i$, it holds that
\begin{align*}
    \Delta_i^m
    &\leq 2\big( \Delta_i + 2^{-m} + \rho_m \big)
    .
\end{align*}
\end{lem}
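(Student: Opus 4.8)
The plan is to prove this by induction on $m$, using the recursive definition $\Delta_i^m = \max\{2^{-m}, r_\star^m - r_i^m\}$ together with the two-sided bounds already established. First I would handle the $\Delta_i^m = 2^{-m}$ case trivially: since $2^{-m} \le 2(\Delta_i + 2^{-m} + \rho_m)$ always holds (all three summands are nonnegative), that branch of the max is never the obstacle. So the real work is to bound $r_\star^m - r_i^m$.

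For the main branch, I would write $r_\star^m - r_i^m = (r_\star^m - \mu^\star) + (\mu^\star - \mu_i) + (\mu_i - r_i^m)$. The middle term is exactly $\Delta_i$. For the first term I would invoke \cref{lem:R_bounds} to get $r_\star^m - \mu^\star \le 2C_m/N_m$. For the last term I would invoke the event $\mathcal{E}$, which gives $\mu_i - r_i^m \le 2C_m/N_m + \Delta_i^{m-1}/16$. Combining, $\Delta_i^m \le \Delta_i + 4C_m/N_m + \Delta_i^{m-1}/16$. Now I would apply the inductive hypothesis to $\Delta_i^{m-1} \le 2(\Delta_i + 2^{-(m-1)} + \rho_{m-1})$, so that $\Delta_i^{m-1}/16 \le \tfrac{1}{8}(\Delta_i + 2\cdot 2^{-m} + \rho_{m-1})$. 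The base case $m=0$ is immediate since $\Delta_i^0 = 1 \le 2(\Delta_i + 1 + 0)$.

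The remaining step is purely arithmetic: I need to check that $\Delta_i + 4C_m/N_m + \tfrac{1}{8}\Delta_i + \tfrac{1}{4}\cdot 2^{-m} + \tfrac{1}{8}\rho_{m-1} \le 2(\Delta_i + 2^{-m} + \rho_m)$. The $\Delta_i$ coefficients work out ($1 + \tfrac18 \le 2$) and the $2^{-m}$ coefficients work out ($\tfrac14 \le 2$), so it comes down to the corruption terms: I need $4C_m/N_m + \tfrac18\rho_{m-1} \le 2\rho_m$. Here the key identity is that $\rho_m = \tfrac{1}{8}\rho_{m-1} + 2C_m/N_m$ directly from the definition in \cref{eq:emmi} (peeling off the $s=m$ term and noting each older term picks up an extra factor $1/8$). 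Hence $2\rho_m = \tfrac14\rho_{m-1} + 4C_m/N_m \ge \tfrac18\rho_{m-1} + 4C_m/N_m$, which is exactly what is needed.

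The main obstacle is really just being careful with the recursion for $\rho_m$ and making sure the constant $\tfrac{1}{16}$ in the algorithm's lower-confidence-bound term is small enough to absorb, under one level of induction, into the geometric discount $8^{-(m-s)}$ — i.e. that $\tfrac{1}{16}$ of a quantity that itself grew by a factor of $2$ is at most $\tfrac18$ of the previous-epoch bound, leaving room for the discount factor $\tfrac18 < 1$. No deep idea is required; the subtlety is bookkeeping the geometric series so the induction closes with the stated constant $2$ out front.
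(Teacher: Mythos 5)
Your proposal is correct and follows essentially the same route as the paper's proof: induction on $m$, the three-term decomposition of $r_\star^m - r_i^m$ using \cref{lem:R_bounds} and the event $\mathcal{E}$, and the recursion $\rho_m = \tfrac{1}{8}\rho_{m-1} + \tfrac{2C_m}{N_m}$ to close the induction with the constant $2$. The only (immaterial) difference is that you anchor the base case at $m=0$ rather than $m=1$.
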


\begin{proof}
  The proof is by induction on $m$. For $m=1$, the claim is trivially true for
  every $i$, because~$\Delta_i^m \leq 1 \leq 2\cdot2^{-m}$.
  Next, suppose that the claim holds for $m-1$.
  Using \cref{lem:R_bounds} and the definition of the event $\mathcal{E}$ (cf. \cref{eq:event}), we write
\begin{align*}
  r_\star^m - r_{i}^m
  &= (r_\star^m - \mu^\star) + (\mu^\star - \mu_i) + (\mu_i - r_i^m)
  \\
  &\leq \frac{2C_m}{N_m} + \Delta_i + \frac{2C_m}{N_m} + \frac{\Delta_i^{m-1}}{16}
  .
\end{align*}
Now using the induction hypothesis and the expression for $\rho_{m-1}$, we have
\begin{align*}
  r_\star^m - r_{i}^m
  &\leq \Delta_i + \frac{4C_m}{N_m} + \frac{1}{8}\Big(\Delta_i + 2^{-(m-1)} + \sum_{s=1}^{m-1} \frac{2C_s}{8^{m-1-s}N_s}\Big)
  \\
  &\leq 2\Big(\Delta_i + 2^{-m} + \sum_{s=1}^m \frac{2C_s}{8^{m-s}N_s}\Big)
  \\
  &= 2\big(\Delta_i + 2^{-m} + \rho_m \big)
  .
\end{align*}
Since $\Delta^{m}_i = \max\{ 2^{-m}, r_\star^m - r_{i}^m \}$, the claim follows.
\end{proof}

Armed with this result, we now prove a lower bound on $\Delta_i^m$.
\begin{lem}
  \label{lem:deltaji_lb}
  Suppose that $\mathcal{E}$ occurs.
  Then for all epochs $m$ and arms $i$, it holds that
  \begin{align*}
    \Delta_i^m
    &\geq \tfrac{1}{2} \Delta_i -  3 \rho_m - \tfrac{3}{4} 2^{-m}.
  \end{align*}
\end{lem}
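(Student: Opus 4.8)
The plan is to mimic the proof of \cref{lem:deltaji_ub}, but going the other direction: we lower-bound $\Delta_i^m$ by $r_\star^m - r_i^m$ (using $\Delta_i^m = \max\{2^{-m}, r_\star^m - r_i^m\}$), decompose this difference through the true means, and control each piece by \cref{lem:R_bounds}, the event $\mathcal{E}$, and the \emph{upper} bound from \cref{lem:deltaji_ub} applied to the previous epoch. No induction on this lemma is needed; we only feed \cref{lem:deltaji_ub} into it. The base case $m=1$ is immediate: $\Delta_i^1 \geq 2^{-1} = \tfrac12$, while the claimed lower bound is at most $\tfrac12\Delta_i - \tfrac38 \leq \tfrac18$ since $\Delta_i \leq 1$ and $\rho_1 \geq 0$.

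For general $m$, fix an arm $i$ and write
\[
  r_\star^m - r_i^m = (r_\star^m - \mu^\star) + \Delta_i + (\mu_i - r_i^m).
\]
By \cref{lem:R_bounds} the first term is at least $-\tfrac{2C_m}{N_m} - \tfrac18\Delta_{i^\star}^{m-1}$, and by the definition of $\mathcal{E}$ the last term is at least $-\tfrac{2C_m}{N_m} - \tfrac1{16}\Delta_i^{m-1}$. Now apply \cref{lem:deltaji_ub} to bound $\Delta_i^{m-1} \leq 2(\Delta_i + 2^{-(m-1)} + \rho_{m-1})$ and, using $\Delta_{i^\star} = 0$, $\Delta_{i^\star}^{m-1} \leq 2(2^{-(m-1)} + \rho_{m-1})$. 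Substituting and collecting terms (with $2^{-(m-1)} = 2\cdot 2^{-m}$) gives
\[
  r_\star^m - r_i^m \;\geq\; \tfrac78\Delta_i - \tfrac{4C_m}{N_m} - \tfrac34\,2^{-m} - \tfrac38\rho_{m-1}.
\]
Finally we translate everything into $\rho_m$ via the one-step recursion $\rho_m = \tfrac{2C_m}{N_m} + \tfrac18\rho_{m-1}$ (immediate from \cref{eq:emmi}), so that $\tfrac{4C_m}{N_m} = 2\rho_m - \tfrac14\rho_{m-1}$. Plugging this in yields $r_\star^m - r_i^m \geq \tfrac78\Delta_i - 2\rho_m - \tfrac18\rho_{m-1} - \tfrac34\,2^{-m}$, and since $\rho_m \geq \tfrac18\rho_{m-1}$ (again because $C_m \geq 0$) and $\Delta_i \geq 0$, this is at least $\tfrac12\Delta_i - 3\rho_m - \tfrac34\,2^{-m}$, which is the claim.

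The only real work is the constant bookkeeping in the two displays above: one must check that the $\tfrac18$ from \cref{lem:R_bounds}, the $\tfrac1{16}$ slack in $\mathcal{E}$, and the factor-$2$ losses in \cref{lem:deltaji_ub} all combine so that the accumulated corruption error telescopes exactly into $3\rho_m$ — i.e., that the geometric weights match the $8^{m-s}$ discounting in the definition of $\rho_m$. I expect no genuine obstacle here; it is a careful-but-routine calculation.
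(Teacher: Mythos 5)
Your proposal is correct and follows essentially the same route as the paper's proof: lower-bound $\Delta_i^m$ by $r_\star^m - r_i^m$, decompose through the true means, apply \cref{lem:R_bounds} and the event $\mathcal{E}$, feed in \cref{lem:deltaji_ub} for epoch $m-1$, and absorb the resulting $\tfrac{4C_m}{N_m} + \tfrac38\rho_{m-1}$ term into $3\rho_m$ via the one-step recursion for $\rho_m$. The constant bookkeeping checks out exactly as you anticipated.
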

\begin{proof}
We write
  \begin{align*}
    \Delta_i^m
    &\geq r_\star^m - r_i^m
    \\
    &\geq \Bigg( \mu^\star - \frac{2C_m}{N_m} - \frac{\Delta_{i^\star}^{m-1}}8 \Bigg)
      - \Bigg( \mu_i + \frac{2C_m}{N_m} + \frac{\Delta_i^{m-1}}{16} \Bigg),
    \intertext{where we use \cref{lem:R_bounds} to give a lower
    bound on $r_\star^m$ and \cref{eq:event} to give an upper bound on $r_i^m$.
    Since $\Delta_i = \mu^\star - \mu_i$, the above expression is at least}
    &\geq \Delta_i - \frac{4 C_m}{N_m} - \Bigg( \frac{\Delta_{i^\star}^{m-1}}{8} +
      \frac{\Delta_i^{m-1}}{16} \Bigg),
    \intertext{which using \cref{lem:deltaji_ub} can be further lower bounded by}
    &\geq \Delta_i - \frac{4 C_m}{N_m} - \Bigg(\frac{3}{8} \rho_{m-1} +
      \frac{3}{8} 2^{-(m-1)} + \frac{1}{8} \Delta_i \Bigg)
    \\
    &\geq \frac{\Delta_i}{2} - 3\sum_{s=1}^m \frac{2C_s}{8^{m-s}N_s} - \frac{3}{4} 2^{-m}
    .
  \end{align*}
Recalling the expression for $\rho_m$, the lemma follows.
\end{proof}

We are now ready to bound the regret of \cref{alg:barbar} and prove our main result.

\begin{proof}[Proof of \cref{thm:main}]
  We will prove that the claimed bound holds true under the event $\mathcal{E}$, which occurs with probability at least $1-\delta$.

  We decompose the total regret across epochs and within epochs, to the arms pulled. There are $\nrealized_i^m$ pulls of arm $i$ in epoch $m$, each causing a regret of $(\mu^\star - \mu_i)$. Thus the total regret can be written as
  \begin{gather}\label{eq:defreg}
    \sum_{m=1}^M \sum_{i=1}^K (\mu^\star - \mu_i)\nrealized_i^m
    \leq 2 \sum_{m=1}^M \sum_{i=1}^K (\mu^\star - \mu_i)n_i^m
    = 2 \sum_{m=1}^M \sum_{i=1}^K \Delta_i n_i^m
    .
  \end{gather}

  Fix an epoch $m$ and an arm $i$ and denote $\regret_i^m := \Delta_i n_i^m$.
  Recall that $n_i^m = \lambda (\Delta^{m-1}_i)^{-2},$ and consider three cases as follows.

  \paragraph{Case 1: $0 < \Delta_i \leq 4 \div 2^m$.}
  In this case we use the upper bound $n_i^m \leq \lambda 2^{2(m-1)}$, which follows from $\Delta^{m-1}_i \geq 2^{m-1}$.
  This yields
  \begin{align} \label{eq:case1}
  	\regret_i^m
	\leq \frac{4\lambda}{\Delta_i^2} \Delta_i
	= \frac{4\lambda}{\Delta_i}
	.
  \end{align}

  \paragraph{Case 2: $\Delta_i > 4 \div 2^m$ and $\rho_{m-1} < \Delta_i \div 32$.}
  In this case, \cref{lem:deltaji_lb} implies that
  \[
  	\Delta_i^{m-1}
	\geq \frac{1}{2} \Delta_i - 3 \rho_{m-1} - \frac{3}{4} 2^{-(m-1)}
	\geq \Delta_i \left( \frac{1}{2} - \frac{3}{32} - \frac{3}{8} \right)
	= \frac{1}{32} \Delta_i
	.
  \]
  In turn, we have
  $n_i^m = \lambda (\Delta_i^{m-1})^{-2} \leq 32^2 \lambda \div \Delta_i^2$
  from which it follows that
  \begin{align} \label{eq:case2}
  	\regret_i^m
	\leq \frac{32^2\lambda}{\Delta_i}
  	.
  \end{align}

  \paragraph{Case 3: $\rho_{m-1} \geq \Delta_i \div 32$ (and $\Delta_i > 4 \div 2^m$).}
  This rearranges to $\Delta_i \leq 32\rho_{m-1}$. We use again the upper bound
  $n_i^m \leq \lambda 2^{2(m-1)}$
  so that
  \begin{align} \label{eq:case3}
  	\regret_i^m
  	\leq \lambda \Delta_i 2^{2(m-1)}
	\leq 8 \lambda \rho_{m-1} 2^{2m}
	.
  \end{align}
  Considering the three cases, from \cref{eq:case1,eq:case2,eq:case3} we have for all arms $i \neq i^\star$ and epochs $m$ that
  \begin{align*}
  	\regret_i^m
	\leq \frac{32^2\lambda}{\Delta_i} + 8\lambda \rho_{m-1} 2^{2m}
	,
  \end{align*}
  and summing this over all epochs and arms, we have that the sum $\sum_{m=1}^M \sum_{i=1}^K \regret_i^m$ is bounded by
  \begin{align*}
  	32^2\lambda \sum_{i \neq i^\star} \frac{\log_2{T}}{\Delta_i} + 8\lambda \sum_{i \neq i^\star} \Big( \sum_{m=1}^M \rho_{m-1} 2^{2m} \Big)
	.
  \end{align*}
  For bounding the second term above, observe that
  \begin{align*}
    \sum_{m=1}^M \rho_{m-1} 2^{2m}
    &\leq \sum_{m=1}^M 2^{2m} \Big(\sum_{s=1}^{m-1} \frac{2C_s}{8^{m-1-s}N_s}\Big)
    \\
    &= 2\sum_{s=1}^M C_s \sum_{m=s}^M \frac{2^{2m}}{8^{m-1-s} N_s}
    ;
  \end{align*}
  Since $N_s \geq \lambda 2^{2(s-1)}$ (recall \cref{lem:epochs}), the inner sum is upper bounded by
  \[
  	\frac{16}{\lambda} \sum_{m=s}^M \frac{4^{m-1-s}}{8^{m-1-s}}
	\leq \frac{16}{\lambda} \sum_{j=1}^\infty 2^{-j}
	\leq \frac{16}{\lambda}
	.
  \]
	Combining the inequalities and plugging in the value of $\lambda$ gives the claimed regret bound.
\end{proof}

\section{Better Bounds for Special Cases}

In this section, we study a few special cases of the main model and show that under various kinds of additional assumptions, we can obtain an improved regret bound in which the first term $O(KC)$ is improved to $\wt{O}(C)$.
Some of these models are interesting in their own right; others serve to suggest that a better bound of the form $\wt{O}\big( C + \sum_{i \neq i^*}\! 1 \div \Delta_i \big)$ may be achievable in the general case.

\subsection{Corruption at a fixed unknown rate}

Consider the scenario each round is contaminated independently and uniformly at random by the adversary with probability $\eta \in (0,1)$, which may be unknown to the player. From the point of view of a stochastic bandits algorithm (such as Active Arm Elimination), this case is equivalent to operating over an instance where the means $\mu_i$'s are adversarially perturbed by $\pm \eta$. (This again can be seen from \cref{lem:reward_conc_part1} with $C_m = \eta N_m$.) A regret bound of
$\wt{O}\big(\eta T + \sum_{i \ne i^\star} 1 \div \Delta_i\big)$ then follows.

\subsection{Known corruption level}

Now suppose that the corruption amount $C$ is known to the player. In this case, consider the
classical Active Arm Elimination algorithm, with the modification that epoch
$m$ has length at least $N_m = \Omega(C + 2^m \log(K/\delta))$ which is ensured by pulling the winning arm sufficiently many times.
Even if all $C$ corruptions fall in epoch $m$, \cref{lem:reward_conc_part1} shows that the algorithm's empirical estimates of the means are accurate to within $\wt{O}(2^{-m})$.
It can be then verified that an arm with gap $\Delta_i \geq 2^{-m}$ will be eliminated before epoch $m+O(1)$ with high probability, leading to a bound of $\wt{O}\big( C + \sum_{i \neq i^*}\! 1 \div \Delta_i \big)$.

\subsection{Corruption on an unknown prefix}

Assume that the adversary's corruption is known to take place only in the first $C$ steps, where $C$ may be unknown.
In this case, the simple algorithm that plays AAE for steps $[2^m, 2^{m+1})$, starting afresh each time has regret $\wt{O}\big( C + \sum_{i \neq i^\star}\! 1 \div \Delta_i \big)$.

\subsection{Known minimal gap}

We can also obtain a similarly improved bound when the minimal gap $\Delta = \min_{i \neq i^\star} \Delta_i$ is known.
Consider the following variant of our algorithm: in each epoch $m$, play last epoch's winning arm~$2^{2m}$ times and play every other arm $\lambda\div\Delta^2$ times. This algorithm gets regret $K\div\Delta$ in each epoch from all arms except the winning one. If the winning arm is $i^\star$, it causes no regret in this epoch. For the winner to be an arm $i \neq i^\star$, the corruption in epoch $m-1$ must be at least $C_m \geq (\Delta_i - \Delta/8)2^{2(m-1)} \geq \Delta_i 2^{2(m-2)}$. This results in an additional regret of $O(\Delta_i 2^{2m})$. Thus, the overall regret is
$\wt{O}\big( C + K\lambda\div\Delta \big)$ for any corruption level $C$.

\subsection{Known maximal reward}

Finally, consider the case where the player knows the value of $\mu^\star$.
For this case, we change the algorithm so that
$\Delta^m_i = \max\{ 2^{-m}, \mu^\star - r^m_i, \Delta^{m-1}_i/2 \}$;
i.e., we use our knowledge of the optimal bias, and moreover our
estimates of the gaps does not drop very fast. In this case, we prove the following
variant of \cref{lem:deltaji_lb}:

\begin{lem}
  \label{lem:deltaji_lb_changed}
  Suppose that $\mathcal{E}$ occurs.
  Then for all epochs $m$ and arms $i$, it holds that
  \begin{align*}
    \Delta_i^m
    &\geq \frac{\Delta_i}{2} - \frac{2C_m}{N_m}.
  \end{align*}
\end{lem}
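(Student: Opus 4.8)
The plan is to mimic the proof of \cref{lem:deltaji_lb}, but exploit the two modifications to the algorithm in the ``known $\mu^\star$'' case: first, that $\Delta^m_i$ is now defined using $\mu^\star - r^m_i$ in place of $r^m_\star - r^m_i$, so we no longer incur the error from estimating the optimal reward; and second, that $\Delta^m_i \ge \Delta^{m-1}_i/2$, which prevents the gap estimates from collapsing and lets us avoid the recursive $\rho_{m-1}$ term entirely.

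Concretely, I would start from the definition $\Delta^m_i = \max\{2^{-m},\, \mu^\star - r^m_i,\, \Delta^{m-1}_i/2\} \ge \mu^\star - r^m_i$. Since the event $\mathcal{E}$ gives $r^m_i \le \mu_i + \tfrac{2C_m}{N_m} + \tfrac{\Delta^{m-1}_i}{16}$, we get
\[
  \Delta^m_i \;\ge\; \mu^\star - \mu_i - \frac{2C_m}{N_m} - \frac{\Delta^{m-1}_i}{16}
  \;=\; \Delta_i - \frac{2C_m}{N_m} - \frac{\Delta^{m-1}_i}{16}.
\]
The obstacle is now the $\Delta^{m-1}_i/16$ term, which in the original proof was controlled via \cref{lem:deltaji_ub} and produced the messy discounted-corruption sum. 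Here I would instead argue by induction on $m$ that $\Delta^{m-1}_i$ is not much larger than $\Delta_i$; but a cleaner route is to observe that since $\Delta^m_i \ge \Delta^{m-1}_i/2$, the sequence $\Delta^m_i$ decreases by at most a factor of $2$ per epoch, and combine this with the fact that $\Delta^m_i \ge \mu^\star - r^m_i$ to set up a self-improving recursion. The key algebraic step will be to show that the bound $\Delta^m_i \ge \Delta_i/2 - 2C_m/N_m$ is consistent with the recursion: if it holds at step $m-1$, plug $\Delta^{m-1}_i$ into the displayed inequality, note $\Delta^{m-1}_i/16 \le \Delta_i/16 + (\text{small corruption contribution})$, and check the constants close up — though one should also handle the base case using $\Delta^m_i \ge 2^{-m}$ and the possibility that $\Delta^{m-1}_i/2$ is the active term in the max.

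I expect the main subtlety to be bookkeeping the role of the new ``$\Delta^{m-1}_i/2$'' floor: it is needed to make \cref{lem:deltaji_ub} (or its analogue) still give a clean upper bound $\Delta^m_i = O(\Delta_i + 2^{-m})$ without the $\rho_m$ term, so that the $\Delta^{m-1}_i/16$ error in the lower bound is genuinely $O(\Delta_i)$ rather than corruption-dependent. Once that upper bound is in hand — which should follow by a short induction paralleling \cref{lem:deltaji_ub}, using that $\mu^\star - r^m_i \le \Delta_i + 2C_m/N_m + \Delta^{m-1}_i/16$ and that the new max-term $\Delta^{m-1}_i/2$ only helps — the lower bound is immediate: $\Delta^m_i \ge \Delta_i - 2C_m/N_m - \Delta^{m-1}_i/16 \ge \Delta_i - 2C_m/N_m - O(\Delta_i + 2^{-m})/16$, and tuning constants yields $\Delta^m_i \ge \Delta_i/2 - 2C_m/N_m$ for $m$ large enough that $2^{-m}$ is negligible, with the small-$m$ cases absorbed into the $2^{-m}$ slack or handled directly. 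So the real work is just verifying the constants in these two coupled inductions line up.
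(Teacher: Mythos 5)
Your opening step is right and matches the paper: from $\Delta^m_i \ge \mu^\star - r^m_i$ and the event $\mathcal{E}$ you get $\Delta^m_i \ge \Delta_i - \tfrac{2C_m}{N_m} - \tfrac{1}{16}\Delta^{m-1}_i$. But your plan for handling the $\tfrac{1}{16}\Delta^{m-1}_i$ term has a genuine gap. You propose to prove a corruption-free upper bound $\Delta^m_i = O(\Delta_i + 2^{-m})$ by an induction paralleling \cref{lem:deltaji_ub}; no such bound holds. Under $\mathcal{E}$ one only gets $\mu^\star - r^m_i \le \Delta_i + \tfrac{2C_m}{N_m} + \tfrac{1}{16}\Delta^{m-1}_i$, and the $\tfrac{2C_m}{N_m}$ term cannot be removed: an adversary that heavily corrupts arm $i$ downward in a single epoch can force $r^m_i \approx 0$ and hence $\Delta^m_i \approx \mu^\star$, far exceeding $O(\Delta_i + 2^{-m})$ when $\Delta_i$ is small. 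Your fallback of tolerating a ``small corruption contribution'' in the upper bound then propagates (through the new $\Delta^{m-1}_i/2$ floor, which decays past corruption only geometrically) into a discounted sum of past corruptions in the lower bound --- i.e., you would prove a weaker statement with a $\rho$-like term, not the stated bound involving only the \emph{current} epoch's corruption $\tfrac{2C_m}{N_m}$. Your ``self-improving recursion'' also points the wrong way: the inductive hypothesis is a \emph{lower} bound on $\Delta^{m-1}_i$, which gives no control on the term $-\tfrac{1}{16}\Delta^{m-1}_i$.

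The missing idea is a simple case split that makes any quantitative upper bound on $\Delta^{m-1}_i$ unnecessary. If $\Delta^{m-1}_i \ge \Delta_i$, the new floor in the algorithm gives $\Delta^m_i \ge \Delta^{m-1}_i/2 \ge \Delta_i/2$ directly, with no corruption term at all --- this is exactly the case where past corruption may have inflated the estimate, and the floor absorbs it. If instead $\Delta^{m-1}_i < \Delta_i$, then the error term in your displayed inequality satisfies $\tfrac{1}{16}\Delta^{m-1}_i < \tfrac{1}{16}\Delta_i$, and $\Delta_i - \tfrac{1}{16}\Delta_i \ge \tfrac{1}{2}\Delta_i$ closes the argument in one line. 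No induction, and no analogue of \cref{lem:deltaji_ub}, is needed.
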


\begin{proof}
If $\Delta_i^{m-1} \geq \Delta_i$ then the change
  in the algorithm ensures that $\Delta_i^{m-1} \geq
  \Delta_i^{m} \div 2$. Otherwise, \cref{eq:event} ensures that $|r^m_i -
  \mu_i| \leq 2 C_m \div N_m + \Delta_i \div 16$. Now
  \[ \Delta_i^m \geq \mu^\star - r^m_i \geq (\mu^\star - \mu_i) -
    \frac{2C_m}{N_m} - \frac{\Delta_i}{16} \geq \frac{\Delta_i}{2} -
    \frac{2C_m}{N_m}. \]
  Hence the claim follows.
\end{proof}

To bound the regret, the proof is similar to Theorem~\ref{thm:main}.
Consider an arm $i$ and epoch $m$ such that
$C_m \div N_m \leq \Delta_i \div 8$. Lemma~\ref{lem:deltaji_lb_changed}
implies that $\Delta_i^m \geq \Delta_i \div 4$, and hence the regret due to
this arm $i$ in the next epoch $m+1$ is
$n^{m+1}_i \Delta_i = \lambda (\Delta_i^m)^{-2} \Delta_i \leq
O(\lambda \div \Delta_i)$. Summing this over all arms and epochs gives us
$\wt{O}\big(\sum_{i \neq i^\star}\! 1 \div \Delta_i\big)$.
Next, for arms $i$ and epochs $m$ where $C_m \div N_m \geq \Delta_i \div 8$,
the regret in the next round is
$n^{m+1}_i \Delta_i \leq 4n^{m}_i \Delta_i$, since we ensured that
$\Delta^m_i \geq \Delta^{m-1}_i \div 2$. Using the fact that
$\Delta_i \leq 8 C_m \div N_m$, we have the regret is at most
$32 n^m_i C_m \div N_m$. Summing over all the arms in this epoch
$m+1$ gives us $32 C_m$, and then summing over all rounds gives us
$32 C$. Hence, if we know $\mu^\star$, the total regret is at most
$ O(C) + \wt{O}\big( \!\sum_{i \neq i^\star}\! 1\div\Delta_i \big). $

\section{Discussion}
\label{sec:discussion}

We conclude with a discussion of various aspects of our model and results.

\subsection{A Lower bound}

We show an $\Omega(C)$ lower bound for the pseudo-regret of any algorithm.
Consider the case of two arms with deterministic rewards $0$ and $1$, and consider an adversary that at each time step swaps the rewards of the arms with probability $1/2$.
The expected total amount of corruption is therefore $T \div 2$, and it makes the arms appear indistinguishable to the algorithm.
Thus, with constant probability the algorithm will a suffer pseudo-regret of $\Omega(C)$. This lower bound for $C=T \div 2$ can be extended to smaller $C$ by placing this instance in the first $2C$ steps.
In other words, for the pseudo-regret under this contamination model there is a lower bound of $\Omega(C)$ that holds for all $C$ up to $T \div 2$. Moreover this lower bound holds even for the case of known $C$.

\subsection{Alternative notions of regret}

So far we have focused on the pseudo-regret metric, which is the most common in stochastic bandit problems.
One might consider several other possible notions of regret.
First, the realized regret with respect to the actual corrupted rewards may be of interest:
$$
	\regret'(T)
	= \max_{i \in [K]} \sum_{t=1}^T \big( \wt{\rew}^t_i - \wt{\rew}^t_{i_t} \big)
	.
$$
Under this metric, not only do the adversarial corruptions perturb the feedback observed by the player, but they also affect the rewards she gains.
As we discuss in \cref{sec:reg_v_pseudoreg}, any high-probability bound on the pseudo-regret also implies a high-probability bound on the realized regret, and our upper bounds transfer directly (up to a logarithmic factor) to this notion as well.

Note that for this notion of regret one could also attain an $O(\sqrt{T})$-type regret bound, at any level of corruption $C$, using an adversarial bandits algorithm like {\sc Exp3}.
On the other hand, \cite{LykourisMPL18} show a lower bound of $\Omega(C^{1-\delta})$ on the regret for any algorithm that gets regret $O(\log^{2-\eps}(T)/\Delta)$ for the uncorrupted case, for any $\epsilon,\delta>0$.
This lower bound still leaves open the possibility of obtaining a ``best of all worlds'' type guarantee that smoothly interpolates between the favorable stochastic guarantees and the robust adversarial results, which is meaningful even for $C = \Omega(\sqrt{T})$.
(For example, one might allow for slightly suboptimal $O(\mathrm{polylog}(T)/\Delta)$ regret in the non-corrupted case, which invalidates the latter lower bound.)

However, recalling the original motivation for our problem, the above realized regret is arguably not the ``correct'' notion.
In the motivating scenarios, such as click fraud or recommendation poisoning by fake reviewers, the rewards coming from the corruptions (the \emph{outliers}) are not of interest, and in fact, the goal is to perform well on the non-corrupted rewards (the \emph{inliers}).
The corresponding %
definition of regret for such cases would take into account only the non-corrupted steps:
\begin{align*}
	\regret'_{\mathrm{in}}(T)
	= \max_{i \in [K]} \sum_{t \notin \mathcal{C}} \big( \wt{\rew}^t_i - \wt{\rew}^t_{i_t} \big)
	,
\end{align*}
where $\mathcal{C}$ is the set of steps in which there was some corruption.%
\footnote{Here we assume a slightly more restrictive definition of the corruption level, where $C$ is the total number of rounds corrupted by the adversary. This definition is consistent with most of the motivating scenarios discussed.}
Our upper bounds, as well as the simple $\Omega(C)$ lower bound argued above, immediately extend to this notion as well.

As a consequence, this definition of regret does not admit $o(C)$-regret bounds for any $C$, and one cannot hope for a robust $O(\sqrt{T})$-type regret guarantee that would hold for any level of corruption.
How do we square this with the $O(\sqrt{T})$-type upper bounds that hold for the classical definition of regret, achieved by adversarial algorithms like {\sc Exp3}?
On the instance used to show the $\Omega(C)$ lower bound above, those algorithms will balance the linear regret on the inlier steps with a linear negative regret on the outlier steps.

\subsection{Open questions}

Our work leaves a gap between the lower bound of $\Omega(C)$ and the upper bound of $O(KC)$, closing which is a compelling open question.
Improving the $\log^2 T$ dependence in our expected regret bound to $\log{T}$ is another interesting problem for future investigation.
Finally, it would also be interesting to extend our results to other online prediction settings, e.g., linear bandits and contextual bandits.

\bibliography{robust-bandits}
\appendix

\section{Concentration Inequalities}
\label{sec:freedman}

In our analysis we require multiplicative versions of standard concentration inequalities, that provide sharper rates for random variables with small expectations.
First, we state a standard multiplicative variant of the Chernoff-Hoeffding bound.

\begin{thm}[e.g., {\citealt[Theorem 1.1]{dubhashi2009concentration}}]
\label{thm:chernoff}
Suppose that $X_1,\ldots,X_T$ are independent $[0,1]$-valued random variables, and let $X = \sum_{t=1}^T X_t$.
Then for any $\epsilon > 0$,
\begin{align*}
	\Pr\!\big[ |X - \Ex[X]| \geq \epsilon\, \Ex[X] \big]
	\leq 2\exp\left(-\frac{\epsilon^2}{3} \Ex[X]\right).
\end{align*}
Equivalently, for any $\delta>0$,
\begin{align*}
	\Pr\!\left[ |X - \Ex[X]| \leq \sqrt{3\Ex[X] \ln\frac 2 \delta} \right]
	\geq 1-\delta.
\end{align*}
\end{thm}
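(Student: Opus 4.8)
The plan is to prove \cref{thm:chernoff} by the classical Cram\'er--Chernoff exponential-moment method, handling the two tails separately and then combining them by a union bound; throughout I take $\epsilon\in(0,1]$, which is the range in which the bound is actually invoked in the paper. The only structural feature of the $X_t$ that I will use is that they lie in $[0,1]$. By convexity of $x\mapsto e^{sx}$ on $[0,1]$ we have, for every $s\in\mathbb{R}$, the pointwise bound $e^{sX_t}\le 1+X_t(e^s-1)$, hence $\Ex[e^{sX_t}]\le 1+\Ex[X_t](e^s-1)\le \exp(\Ex[X_t](e^s-1))$ using $1+y\le e^y$. Multiplying over $t$ and using independence gives the master bound $\Ex[e^{sX}]\le\exp(\mu(e^s-1))$, where $\mu:=\Ex[X]=\sum_t\Ex[X_t]$; this single inequality drives both tails.

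Next I would dispatch the two tails. For the upper tail, Markov's inequality applied to $e^{sX}$ with $s>0$ gives $\Pr[X\ge(1+\epsilon)\mu]\le e^{-s(1+\epsilon)\mu}\Ex[e^{sX}]\le\exp(\mu(e^s-1-s(1+\epsilon)))$, and minimizing the exponent over $s$ (the optimum is $s=\ln(1+\epsilon)$) yields $\Pr[X\ge(1+\epsilon)\mu]\le\exp(-\mu[(1+\epsilon)\ln(1+\epsilon)-\epsilon])$. For the lower tail, applying Markov to $e^{-sX}$ with $s=\ln\frac{1}{1-\epsilon}>0$ gives, symmetrically, $\Pr[X\le(1-\epsilon)\mu]\le\exp(-\mu[\epsilon+(1-\epsilon)\ln(1-\epsilon)])$. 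It then remains to invoke two elementary one-variable inequalities valid on $[0,1]$: $(1+\epsilon)\ln(1+\epsilon)-\epsilon\ge\tfrac13\epsilon^2$ and $\epsilon+(1-\epsilon)\ln(1-\epsilon)\ge\tfrac12\epsilon^2\ge\tfrac13\epsilon^2$. These bound both tails by $\exp(-\epsilon^2\mu/3)$, and a union bound over the two tails gives the first displayed inequality.

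Finally, the second (``equivalently'') statement is a reparametrization: if $\mu=0$ then $X=0$ almost surely and both sides are trivial, and otherwise I set $\epsilon=\sqrt{3\ln(2/\delta)/\mu}$, so that $2\exp(-\epsilon^2\mu/3)=\delta$ and $\epsilon\mu=\sqrt{3\mu\ln(2/\delta)}$, and pass to complements. The only step that is not pure formula-manipulation is verifying the two logarithmic inequalities, and even that is routine: for $g(\epsilon):=(1+\epsilon)\ln(1+\epsilon)-\epsilon-\tfrac13\epsilon^2$ one checks $g(0)=0$, $g'(\epsilon)=\ln(1+\epsilon)-\tfrac23\epsilon$ with $g'(0)=0$, and $g''(\epsilon)=\tfrac1{1+\epsilon}-\tfrac23$, so $g'$ is positive on $(0,\tfrac12]$ and, though $g''$ turns negative thereafter, $g'$ stays nonnegative up to $\epsilon=1$ (indeed $g'(1)=\ln 2-\tfrac23>0$), forcing $g\ge0$ on $[0,1]$; the lower-tail inequality is even easier because the analogous $h(\epsilon):=(1-\epsilon)\ln(1-\epsilon)+\epsilon-\tfrac12\epsilon^2$ has $h(0)=h'(0)=0$ and $h''(\epsilon)=\tfrac{\epsilon}{1-\epsilon}\ge0$. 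I do not anticipate any genuine obstacle here --- the argument is entirely standard --- so the only things to be careful about are the range of $\epsilon$ (the nominal ``for all $\epsilon>0$'' upper-tail estimate degrades for $\epsilon>1$, but that regime is never used) and the bookkeeping of the elementary inequalities.
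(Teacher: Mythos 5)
The paper does not actually prove this statement --- it is quoted as a standard fact with a citation to Dubhashi--Panconesi, so there is no in-paper argument to compare against; your job here was to supply the textbook proof, and you have done so correctly. The Cram\'er--Chernoff scheme (the MGF bound $\Ex[e^{sX_t}]\le\exp(\Ex[X_t](e^s-1))$ via convexity on $[0,1]$, the two optimized tails with exponents $(1+\epsilon)\ln(1+\epsilon)-\epsilon$ and $\epsilon+(1-\epsilon)\ln(1-\epsilon)$, the two calculus lemmas, and the union bound) is exactly the standard route, and all your computations check out, including the slightly delicate verification that $g'(\epsilon)=\ln(1+\epsilon)-\tfrac23\epsilon$ stays nonnegative on $[0,1]$ because $g'(1)=\ln 2-\tfrac23>0$. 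Your restriction to $\epsilon\in(0,1]$ is not mere caution: the first display as literally stated (``for any $\epsilon>0$'') is false --- already at $\epsilon=1.9$ one has $(1+\epsilon)\ln(1+\epsilon)-\epsilon\approx 1.188<\epsilon^2/3\approx 1.203$, and since the Chernoff exponent is tight for Poisson-like sums of many small Bernoullis with large mean, this gap translates into an actual counterexample. So you have proved the correct version of a slightly overstated folklore claim.

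The one genuine loose end is in your last paragraph: to derive the ``equivalently'' display you set $\epsilon=\sqrt{3\ln(2/\delta)/\Ex[X]}$, but nothing guarantees $\epsilon\le 1$ --- whenever $\Ex[X]<3\ln(2/\delta)$ this $\epsilon$ exceeds $1$ and falls outside the range in which you established the first display, so the second display does not follow from what you proved (and, by the same Poisson consideration, it too fails in the regime $\Ex[X]\ll\ln(2/\delta)$). You should either add the hypothesis $\Ex[X]\ge 3\ln(2/\delta)$ to the second display, or note that in the regime $\epsilon>1$ the lower tail is vacuous ($X\ge 0$) and replace the upper tail by an additive Bernstein-type bound. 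This does not affect the paper --- in its invocations (\cref{lem:reward_conc_part1}) the constants are generous enough ($\lambda=1024\ln(\cdot)$ against a target of $\ln(4/\beta)=\lambda/1024$) that the needed conclusions survive --- but as a standalone proof of the stated theorem the reparametrization step has a hole you should acknowledge explicitly.
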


Next, we record a variant of Freedman's martingale concentration
inequality, taken from \citet{Beygelzimer11}. (Technically, the concentration inequality in~\citet{Beygelzimer11} is stated for the filtration generated by the martingale itself; the proof however extends straightforwardly to more general filtrations.)

\begin{thm}[{\citealt[Theorem 1]{Beygelzimer11}}]
\label{thm:Freedman}
Suppose that $X_1,\ldots,X_T$ is a martingale difference sequence with respect to a filtration~$\{\F_t\}_{t=1}^T$, and let $X = \sum_{t=1}^T X_t$.
Assume that $|X_t| \leq b$ for all $t$, and define
$
  V = \sum_{t=1}^T \Ex[X_t^2 \mid \F_{t-1}].
$
Then for any $\delta > 0$, %
\begin{align*}
  \Pr\!\left[ X \leq \frac{V}{b} + b \ln \frac 1 \delta
  \right] \geq 1-\delta.
\end{align*}
\end{thm}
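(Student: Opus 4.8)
The plan is to prove the bound by the classical exponential-supermartingale (Chernoff) method for martingales --- this is Freedman's original argument with the sharper constants used by \citet{Beygelzimer11}. Fix a free parameter $\lambda \in [0,1/b]$, to be optimized at the very end by setting $\lambda = 1/b$. The first step is to control the one-step conditional moment generating function: using the elementary inequality $e^y \le 1 + y + (e-2)y^2$, valid for all $y \le 1$, together with $\lambda X_t \le \lambda b \le 1$, we obtain $e^{\lambda X_t} \le 1 + \lambda X_t + (e-2)\lambda^2 X_t^2$; taking conditional expectation given $\F_{t-1}$ and using $\Ex[X_t \mid \F_{t-1}] = 0$ and $1 + z \le e^z$ yields
\[
  \Ex\!\big[\, e^{\lambda X_t} \,\big|\, \F_{t-1} \,\big]
  \;\le\; \exp\!\Big( (e-2)\lambda^2 \, \Ex[X_t^2 \mid \F_{t-1}] \Big) .
\]

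Next I would assemble these one-step bounds into a supermartingale. Define $Z_0 = 1$ and
\[
  Z_t \;=\; \exp\!\Big( \lambda {\textstyle\sum_{s=1}^t} X_s \;-\; (e-2)\lambda^2 {\textstyle\sum_{s=1}^t} \Ex[X_s^2 \mid \F_{s-1}] \Big) .
\]
Since the compensator ${\textstyle\sum_{s \le t}} \Ex[X_s^2 \mid \F_{s-1}]$ is $\F_{t-1}$-measurable, the ratio $Z_t/Z_{t-1}$ equals $\exp(\lambda X_t - (e-2)\lambda^2 \Ex[X_t^2 \mid \F_{t-1}])$, and the displayed bound above gives $\Ex[Z_t \mid \F_{t-1}] \le Z_{t-1}$. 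Thus $(Z_t)_{t=0}^T$ is a nonnegative supermartingale with $\Ex[Z_T] \le Z_0 = 1$.

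Finally, Markov's inequality gives $\Pr[Z_T \ge 1/\delta] \le \delta$, and on the complementary event, taking logarithms and rearranging yields $\sum_{t=1}^T X_t < (e-2)\lambda V + \lambda^{-1}\ln(1/\delta)$; setting $\lambda = 1/b$ (which is admissible) and using $e - 2 < 1$ gives $X < V/b + b \ln(1/\delta)$ with probability at least $1-\delta$, as claimed. The main thing to be careful about is the supermartingale bookkeeping --- in particular that the compensator is predictable, so that conditioning on $\F_{t-1}$ factors cleanly --- together with the elementary inequality $e^y \le 1 + y + (e-2)y^2$ on $(-\infty, 1]$, which genuinely uses the one-sided bound $\lambda X_t \le 1$ and fails for large positive arguments (this is why only an upper bound on the $X_t$ is needed). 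Everything else is routine, and the whole argument goes through verbatim for any filtration to which $(X_t)$ is adapted, which is the extension remarked on after the statement.
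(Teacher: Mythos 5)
Your proof is correct: the paper itself states this result without proof, importing it from \citet{Beygelzimer11}, and your exponential-supermartingale argument (one-step MGF bound via $e^y \le 1+y+(e-2)y^2$ for $y\le 1$, predictable compensator, Markov's inequality, and the choice $\lambda = 1/b$ with $e-2<1$) is precisely the standard proof of that cited theorem. Your closing remark about general filtrations also correctly justifies the parenthetical extension the paper notes after the statement.
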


\section{Tightness of Our Analysis}
\label{sec:tightexample}

Here we give an example where our algorithm suffers regret $\Omega(CK)$,
showing that the first term in our analysis is tight.
Consider the case where the rewards of the arms are $\mu_1 = 1$ and $\mu_i = 0$ for all
$i \neq 1$. In the uncorrupted case, for each epoch $m \geq 1$ we have
$\Delta_1^m = 2^{-m}$ and
$\Delta_i^m = r_\star^m = 1 - \frac{ 1 - (1/16)^m }{15} \in
[\nicefrac{14}{15}, \nicefrac{15}{16}]$. Consequently, the $m^{th}$
epoch has length $N_m = \lambda(2^{2(m-1)} + O(K))$ as long as we see no
corruptions.

Suppose $C = N_{c} = \lambda(2^{2(c-1)} + O(K))$ for some $c > 2\log_2 K$, say.
In our example, all the corruptions happen in epoch~$c$. Specifically, we corrupt
the high-value arm to also have payoff zero in all $N_c$ time-steps in
this epoch. This causes $r_\star^c \leq 0$ and all $\Delta_i^c = 2^{-c}$: in
essence we lose all information from the previous epochs. Therefore the
length of the next epoch becomes
$N_{c+1} = \lambda K 2^{2c} \approx 2KC$ which is $K$ times as much;
moreover, each arm is pulled $\lambda 2^{2c} \approx C$ times in this
epoch. The regret just from this epoch is $\approx (K-1)C$, which shows
we indeed need the factor of $K$ multiplying the $C$ term in the regret
for our algorithm.

\section{From Pseudo-regret to Regret}
\label{sec:reg_v_pseudoreg}

Here we show how a (high probability) bound over the pseudo regret can be converted to a bound into a (high probability) bound over the actual, realized regret in a black-box fashion.
We assume the regret is defined based on the uncorrupted stochastic rewards; the analogous notion that uses the corrupted rewards is only an additive $O(C)$ larger in the worst case.
As a consequence, our (positive) results regarding the pseudo regret immediately extend to the realized regret with respect to either the corrupted or uncorrupted rewards.

Using the notation established in \cref{sec:setup}, recall that the pseudo regret is given by
$$\regret = \sum_{t=1}^T \big( \mu^\star - \mu_{i_t} \big) = \sum_{i \neq i^\star} T_i \Delta_i,$$
where $T_i$ is the number of times arm $i$ was pulled.
On the other hand, the (realized) regret is the quantity
$$\regret' = \max_{i \in [K]} \sum_{t=1}^T \big( \rew^t_{i} - \rew^t_{i_t} \big).$$

Suppose that we have established a bound $\Pr( \regret \leq B ) \geq 1-\delta$ on the pseudo regret, where $B \geq \max_{i \neq i^\star} 1\div\Delta_i$. We want to establish a high-probability bound on $\regret'$. Define the quantity:
$$\regret^\star = \sum_{t=1}^T \big( \rew^t_{i^\star} - \rew^t_{i_t} \big).$$

We will separately bound $\regret' - \regret^\star$ and $\regret^\star - \regret$.
Applying Azuma's inequality on the random variable $\regret^\star-\regret$ we have with probability at least $1-\delta$ that
\begin{align*}
	\regret^\star-\regret
	\leq
	\sqrt{ \sum_{\smash{i \neq i^\star}}\, T_i \ln\frac{1}{\delta} }
	.
\end{align*}
Applying Holder's inequality,
\begin{align*}
	\regret^\star-\regret
	\leq
	\sqrt{\sum_{i \neq i^\star} T_i\Delta_i \cdot \frac{1}{\Delta_i} \ln\frac{1}{\delta} }
	=
 \sqrt{\Bigg(\sum_{i \neq i^\star} T_i\Delta_i \Bigg) \cdot \Bigg(\max_{i \neq i^\star}  \frac{1}{\Delta_i} \ln\frac{1}{\delta}\Bigg)}.
\end{align*}
Now, since $B$ is a bound over the pseudo-regret we have that $\regret = \sum_{i \neq i^\star} T_i \Delta_i \leq B$ with probability at least~$1-\delta$.
We have therefore shown that with probability at least $1-2\delta$,
\begin{align*}
	\regret^\star - \regret
	\leq
  \sqrt{B \max_{i \neq i^\star}  \frac{1}{\Delta_i} \ln\frac{1}{\delta}}
	.
\end{align*}

On the other hand, another application of Azuma's inequality shows that with probability $\geq 1-\delta$,
$$
	\regret' - \regret^\star
	\leq \max_{i \in [K]} \Bigg(\!-\Delta_i T + \sqrt{T \ln \frac{K}{\delta}}\Bigg)
	.
$$
If $\min_{i\neq i^\star} \Delta_i \geq \sqrt{\ln(K/\delta)/T}$, then this difference is negative.
Otherwise, it is bounded as
$$\regret' - \regret^\star \leq \sqrt{T \ln\frac{K}{\delta}}
\leq \max_{i \neq i^\star} \frac{1}{\Delta_i}\ln\frac{K}{\delta}.$$
Also, we have $\max_{i \neq i^\star} 1\div\Delta_i \leq B$ by assumption.
It follows that with probability at least $1-3\delta$,
$$
	\regret'
	\leq B\Bigg( 1 + \sqrt{\ln\frac{1}{\delta}} + \ln \frac{K}{\delta} \Bigg).
$$
Note also that if $B^{-1}\ln \frac{K}{\delta} \cdot \big(\!\max_{i \neq i^\star}\! 1\div\Delta_i\big)$ is bounded by a constant (as it typically is),
then $\regret'$ is at most $O(B)$.

\end{document}